\documentclass[10pt]{article}
\PassOptionsToPackage{dvipsnames}{xcolor}
\usepackage{lafa}
\providecommand\MakeLinkTarget[2][]{}
\usepackage{microtype}
\usepackage{parskip}
\usepackage{booktabs}
\usepackage{caption}
\usepackage{multirow}
\usepackage{wrapfig}
\usepackage{makecell}
\usepackage{adjustbox}
\usepackage{fullpage}
\usepackage{ragged2e}

\usepackage[ruled,vlined]{algorithm2e}

\SetCommentSty{mycommfont}

\newtcolorbox{samplebox}[1]{
	enhanced,
	colback=LinkBlue!5!white,
	colframe=black!60,
	boxrule=0.6pt,
	arc=6pt,
	left=4pt, right=4pt, top=4pt, bottom=4pt,
	boxsep=3pt,
	before upper={\noindent\textbf{\small #1}\par\smallskip},
}

\usepackage{xcolor}

\usepackage{amsmath}
\usepackage{amssymb}
\usepackage{mathtools}
\usepackage{amsthm}
\usepackage{cancel}
\usepackage{enumitem}
\usepackage{caption}
\setlist[enumerate,1]{leftmargin=2em, itemsep=1em, parsep=0pt}
\setlist[itemize,1]{leftmargin=2em}

\makeatletter
\def\thm@space@setup{%
  \thm@preskip=0.8em plus 0.2em minus 0.2em
  \thm@postskip=0.8em plus 0.2em minus 0.2em
}
\makeatother

\theoremstyle{plain}
\newtheorem{theorem}{Theorem}[section]
\newtheorem{proposition}[theorem]{Proposition}

\theoremstyle{definition}

\usepackage[capitalize,noabbrev]{cleveref}
\usepackage{thm-restate}
\crefformat{section}{Section~#2#1#3}
\crefformat{subsection}{Section~#2#1#3}
\crefformat{subsubsection}{Section~#2#1#3}
\crefformat{equation}{(#2#1#3)}
\crefrangeformat{equation}{(#3#1#4) to (#5#2#6)}
\crefmultiformat{equation}{(#2#1#3)}{ and (#2#1#3)}{, (#2#1#3)}{ and (#2#1#3)}
\crefname{appendix}{Appendix}{Appendices}
\Crefname{appendix}{Appendix}{Appendices}
\crefformat{appendix}{Appendix~#2#1#3}
\AddToHook{cmd/appendix/before}{%
\crefalias{section}{appendix}%
\crefalias{subsection}{appendix}%
\crefalias{subsubsection}{appendix}%
}
\crefname{table}{Table}{Tables}
\crefname{figure}{Figure}{Figures}
\crefname{lemma}{Lemma}{Lemmas}
\crefname{assumption}{Assumption}{Assumptions}

\definecolor{d62728}{RGB}{214,39,40}
\definecolor{1f77b4}{RGB}{31,119,180}
\definecolor{9467bd}{RGB}{148,103,189}
\usepackage{colortbl}
\definecolor{pinegreen}{rgb}{0.0, 0.47, 0.44}
\definecolor{cornellred}{rgb}{0.7, 0.11, 0.11}
\definecolor{cadmiumgreen}{rgb}{0.0, 0.42, 0.24}
\definecolor{spirodiscoball}{rgb}{0.06, 0.75, 0.99}
\definecolor{blizzardblue}{rgb}{0.73, 0.96, 0.99}
\definecolor{aliceblue}{rgb}{0.91, 0.94, 0.97}
\definecolor{darkblue}{RGB}{232, 224, 240}
\definecolor{Red7}{rgb}{0.941, 0.243, 0.243}
\definecolor{Green7}{RGB}{55, 178, 77}

\usepackage{pifont}
\usepackage{array}
\usepackage{xurl}

\newcolumntype{C}[1]{>{\centering\arraybackslash}p{#1}}
\newcolumntype{L}[1]{>{\raggedright\arraybackslash}p{#1}}

\newcommand{\ours}{Lafa}

\title{Large Flow Language Models}

\author[*,1,2]{Arnav Chavan}
\author[*,3]{Chanhyuk Lee}
\author[*,2]{Udbhav Bamba} 
\author[3]{Jaehoon Yoo}
\author[3]{Seunghoon Hong}
\seniorauthor[\dag,4]{Aditi Raghunathan} 
\seniorauthor[\dag,3]{Jinwoo Kim}
\affiliation[1]{UC Berkeley}
\affiliation[2]{Amazon}
\affiliation[3]{KAIST}
\affiliation[4]{Carnegie Mellon University}
\contribution[*]{Equal contribution}
\contribution[\dag]{Equal advising}

\begin{document}

\begin{abstract}
Continuous flow language models offer a path to parallel generation with tools for control and acceleration unavailable to discrete diffusion, yet effective techniques for building them at the billion-parameter scale remain less established.
We present \textbf{\emph{\ours{}} (\emph{La}rge \emph{f}low l\emph{a}nguage model)}, a fast, high-performing flow language model scaled to 8B parameters in a data-efficient manner.
\ours{} is a blockwise flow model with causal attention on the prefix and bidirectional, asynchronous flow within each block.
This design enables efficient adaptation of pretrained autoregressive (AR) models and fast inference via prefix caching and blockwise parallel decoding.
We train \ours{} at 0.5B, 7B, and 8B scales with fewer than 30B tokens, more efficient than previous diffusion models, and evaluate it on instruction following, math, and coding, where the learned flow matches or surpasses the quality of AR and can be distilled into flow maps for one- and few-step generation.
Notably, flow training preserves or even improves pretrained AR capabilities via a multi-token prediction effect, which we leverage in a self-speculative decoding scheme that verifies flow-based drafts autoregressively within a single model.
This yields 2.54$\times$ throughput of an AR model at comparable quality, outperforming multi-token prediction and discrete diffusion methods.
Our findings show that continuous flows can serve as a scalable foundation for language modeling.

\vspace{0.5em}\textbf{\sffamily\bfseries Code:} \href{https://github.com/lafa-gh/large-flow-lms}{\texttt{github.com/lafa-gh/large-flow-lms}}

\vspace{0.1em}\textbf{\sffamily\bfseries Models:} \href{https://huggingface.co/lafa-hf/models}{\texttt{huggingface.co/lafa-hf/models}}

\end{abstract}

\maketitle

\section{Introduction}
\label{sec:intro}

\begin{wrapfigure}[19]{r}{0.42\textwidth}
\centering
\includegraphics[width=\linewidth]{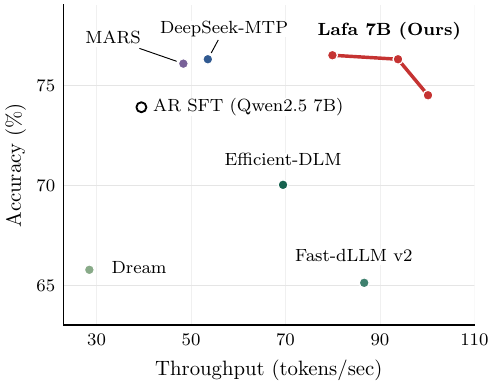}
\caption{%
\textbf{Large flow language models.}
\ours{} 7B attains 2.02--2.54$\times$ the throughput of an AR model at a matched or better quality, outperforming existing MTP and diffusion-based approaches.
}
\label{fig:pareto}
\end{wrapfigure}
Today's large language models are predominantly trained as autoregressive (AR) models that predict the next token from the preceding text~\citep{achiam2023gpt,team2023gemini,guo2025deepseek}.
The parallelism of the transformer architecture and teacher forcing~\citep{vaswani2017attention,williams1989learning} allows these models to be trained efficiently at the scale of billions of parameters and trillions of data tokens.
However, their generation is inherently sequential, producing one token per forward pass, which makes inference latency a bottleneck.
To accelerate inference, prior work has explored approaches to parallel generation, namely multi-token prediction (MTP)~\citep{gloeckle2024better,cai2024medusa,deepseekv3}, discrete diffusion models~\citep{khanna2025mercury,song2025seed,team2026diffusiongemma}, and continuous flow language models~\citep{dieleman2022cdcd,lee2026flm,hu2026elf}.

Distinguished from the other approaches, flow language models perform parallel generation by integrating a continuous flow from noise to token representations~\citep{dieleman2022cdcd}.
By operating in a continuous space, they can leverage powerful techniques developed for controllable and accelerated generation of images and video, many of which are unavailable to discrete methods~\citep{lee2026flm,hu2026elf}.
Yet, existing flow language models have been demonstrated at small scales and on simple tasks~\citep{lee2026flm,chen2026langflow,hu2026elf,batzolis2026cobit,yang2026continuous}, and a prior attempt of scaling has identified challenges in training inefficiency~\citep{gulrajani2024plaid}.
Together, effective techniques for building performant flow language models at the billion-parameter scale remain less established.

In this paper, we present \textbf{\emph{\ours{}} (\emph{La}rge \emph{f}low l\emph{a}nguage model)}, a fast and high-performing flow language model scaled to 8B parameters in a data-efficient manner.
Motivated by the left-to-right directionality of text~\citep{papadopoulos2024arrows}, we design \ours{} as a blockwise flow model, with causal attention on the prefix, and bidirectional but left-to-right-biased asynchronous flow within each block.
This brings its architecture and inductive bias close to AR models, enabling efficient training by adapting pretrained models.
Further, this allows fast inference via prefix caching and parallel flow decoding, which can be further distilled into flow maps for one- and few-step generation.
We train \ours{} with blockwise denoising for the flow, along with AR prediction on the prefix, and find that flow training preserves or even enhances pretrained AR capabilities through a multi-token prediction effect.
We leverage this in a self-speculative decoding scheme that verifies flow-based drafts autoregressively.
With these techniques, we train \ours{} across 0.5B, 7B, and 8B scales with fewer than 30B tokens, cheaper than comparable adaptation of discrete diffusion models that use up to 500B tokens~\citep{dream2024,fu2026edlm}.
Across benchmarks spanning instruction following, reasoning, and coding, \ours{} attains a quality--speed frontier that surpasses similarly sized AR, MTP, and discrete diffusion language models, as shown in \Cref{fig:pareto}.

\begin{figure}[t!]
\centering
\includegraphics[width=\linewidth]{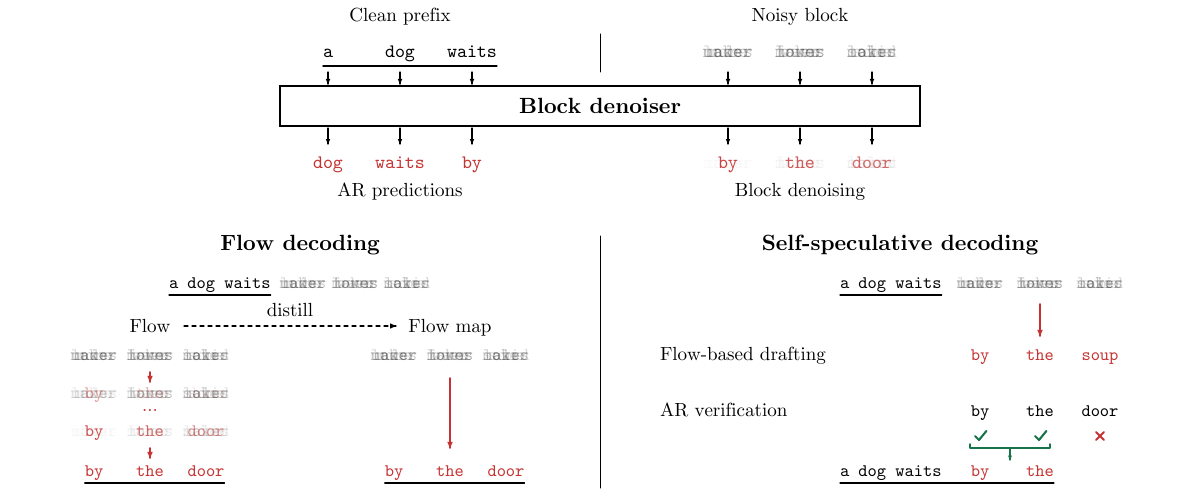}
\caption{\textbf{Overview.}
(Top) \ours{} learns a block denoiser that also performs AR prediction on the prefix.
(Bottom) \ours{} supports flow decoding, which can be further distilled into a flow map for one-step generation.
Together, AR and flow support self-speculative decoding, which accepts flow-based drafts from left to right via AR verification.
}
\label{fig:overview}
\end{figure}

An overview of the approach is in \Cref{fig:overview}.
Our main contributions are as follows:
\begin{itemize}[leftmargin=*]
\item We introduce \ours{}, a large flow language model that learns blockwise asynchronous flow, enabling efficient training by adapting pretrained AR models and fast inference through prefix caching and parallel flows.
\item We identify key design choices for scalability, including the architecture and asynchronous flow formulation, training and flow map distillation objectives, and inference schemes including self-speculative decoding.
\item We scale \ours{} to 7--8B parameters with $<$30B tokens and show that its learned flow matches or surpasses the quality of AR and is distillable into flow maps, and its learning improves pretrained AR capabilities.
\item By jointly leveraging the flow and AR capabilities for self-speculative decoding, \ours{} attains $2.54\times$ the throughput of a strong AR baseline at matched quality, outperforming existing parallel generation methods.
\end{itemize}
Our findings show that continuous flows offer a practical, scalable foundation for parallel language modeling.

\section{Large flow language models}
\label{sec:method}

In this section, we introduce \ours{}, a class of scalable and high-performing large flow language models.
We first formulate blockwise asynchronous flows, then describe the architecture, training, and inference procedures.
See \Cref{app:proofs} for theoretical results and \Cref{app:implementation_details} for algorithmic details.

\subsection{Block asynchronous flows}
\label{sec:async_block_flows}

\paragraph{Blockwise formulation.}
We develop our method upon standard block-autoregressive factorization~\citep{wang2018semi}, as in block discrete diffusion~\citep{arriola2025bd3lm}, retaining causal generation and prefix caching across blocks.
For vocabulary $V$, prefix ${\bf c}\in V^N$, and continuation ${\bf y}\in V^M$, we factorize into blocks of size $B$ (assuming $B\mid M$):
\begin{equation}\label{eq:block_ar}
    p({\bf y}\mid{\bf c})
    = \prod_{r=0}^{M/B-1}
    p\bigl({\bf y}^{rB+1:(r+1)B}
    \mid {\bf c},{\bf y}^{\leq rB}\bigr).
\end{equation}
\paragraph{Embedded flows.}
We embed each block ${\bf y}\in V^B$ using vocabulary embeddings ${\bf E}\in\mathbb{R}^{|V|\times d}$ as ${\bf x}_1\coloneqq({\bf E}_{{\bf y}^1},\ldots,{\bf E}_{{\bf y}^B})^\top\in\mathbb{R}^{B\times d}$.
This reduces the per-token dimension from $|V|$ of one-hot embeddings~\citep{lee2026flm} to $d$, making training more scalable.
Using flow matching~\citep{lipman2023flow,albergo2025stochastic}, we construct a conditional probability path $p_t(\cdot\mid{\bf c})$, $t\in[0,1]$, from Gaussian noise ${\bf x}_0\sim\mathsf{N}(0,\sigma^2I)$ with $\sigma>0$ to the distribution of embedded data ${\bf x}_1$.

\paragraph{Asynchronous flows.}
Standard flow matching uses the interpolant $I_t\coloneqq(1-t){\bf x}_0+t{\bf x}_1$, where $I_t\sim p_t$, with a shared time $t$ for all tokens.
To introduce a left-to-right bias within each block to match the inherent directionality of text~\citep{papadopoulos2024arrows}, we use tokenwise time~\citep{baade2026latent,qian2026learning,wu2023ar,bansal2026token,lee2026asynchronous}, allowing earlier positions to progress faster.
A tokenwise time is a function $\tau:[0,1]\to [0,1]^B$, differentiable and increasing in each coordinate, mapping a base time $t$ to a tokenwise time $\tau^i(t)$ for each $i= 1, ..., B$, satisfying $\tau(0)={\bf 0}$ and $\tau(1)={\bf 1}$.
Given $\tau$, we can construct a probability path $p_t(\cdot\mid{\bf c})$ in base time $t$ through an interpolant $I_t$ defined positionwise~\citep{qian2026learning}:
\begin{equation}\label{eq:async_interpolant}
    I_t^i \coloneqq (1-\tau^i(t))\,{\bf x}_0^i + \tau^i(t)\,{\bf x}_1^i, \qquad i \in[B],
    \qquad I_t\sim p_t(\cdot\mid{\bf c}), \qquad t\in[0,1].
\end{equation}
For each given ${\bf c}$, the probability path $p_t(\cdot\mid{\bf c})$ admits a velocity field $b_t(\cdot; {\bf c}):\mathbb{R}^{B\times d}\to \mathbb{R}^{B\times d}$ that transports a block of noise to clean data embeddings through the following ODE~\citep{qian2026learning}, which can be used for generation:
\begin{equation}\label{eq:flow_velocity}
    \dot{\bf x}_t=b_t({\bf x}_t;{\bf c}),
    \qquad {\bf x}_0\sim p_0,
    \qquad {\bf x}_t\sim p_t(\cdot\mid{\bf c}),
    \qquad t\in[0,1].
\end{equation}
If $\tau$ progresses faster for the earlier tokens (i.e., $\tau^i(t)>\tau^j(t)$ for $i<j$), both interpolation and generation will reveal earlier tokens faster.
In practice, we learn $\tau$ from data instead of fixing it, with details in \Cref{sec:training}.

\paragraph{Block denoiser.}
Instead of learning the asynchronous velocity $b_t(\cdot;{\bf c})$ directly, following related work~\citep{dieleman2022cdcd,lee2026flm}, we learn a block denoiser $D_t(\cdot;{\bf c}):\mathbb{R}^{B\times d}\to (\Delta^{|V|-1})^B$ that predicts clean tokens via tokenwise classification:
\begin{equation}\label{eq:denoising_classifier}
    D_t({\bf x};{\bf c})^i \coloneqq p({\bf y}^i=\cdot\mid{\bf c},I_t={\bf x}), \qquad i \in [B],
\end{equation}
where $\Delta^{|V|-1}$ is the space of probability vectors of length $|V|$.
Notably, learning the denoiser recovers the velocity through the following identity that involves embedding interpolation, which we prove in \Cref{app:proof_async_velocity}:
\begin{equation}\label{eq:async_velocity}
    b_t({\bf x};{\bf c})^i=\frac{\dot\tau^i(t)}{1-\tau^i(t)}(D_t({\bf x};{\bf c})^i{\bf E}-{\bf x}^i).
\end{equation}
The formula scales the direction from noisy input ${\bf x}$ to its denoising $D_t({\bf x};{\bf c}){\bf E}$ by the tokenwise rate of progress $\dot\tau(t)$, thereby implementing the asynchronous progress prescribed by tokenwise time $\tau$.

\subsection{Learning the flow from an autoregressive model}
\label{sec:ar_to_denoiser}
\label{sec:training}

As our formulation keeps a causal structure, we learn the blockwise flow by adapting a pretrained AR model $f$ into a block denoiser.
The challenge is parallel denoising of continuous, noisy embeddings while retaining AR prediction and prefix caching.
Our adaptation combines tokenwise time conditioning, block-bidirectional attention, and constrained updates to pretrained embeddings with joint AR and denoising training.

\paragraph{Tokenwise time conditioning.}
We feed $f$ the embedded prefix ${\bf c}$ followed by the noisy block ${\bf x}$, and condition each position on its corruption level through adaptive layer normalization (adaLN)~\citep{peebles2023scalable}.
Prefix positions always receive time $1$, indicating the absence of noise, while each block position $i\in[B]$ receives its tokenwise time $\tau^i(t)$.
This lets a single forward pass handle both the clean prefix and asynchronous denoising within the block.
We initialize adaLN to the identity, so that adding time conditioning initially leaves the pretrained computation unchanged.
We denote the resulting model by $f_t({\bf x};{\bf c})$.

\begin{wrapfigure}[11]{r}{0.26\textwidth}
\vspace{-1.5em}
\centering
\includegraphics[width=0.85\linewidth]{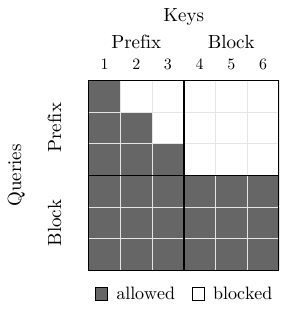}
\caption{Attention mask for block-bidirectional attention.}
\label{fig:attention_mask}
\end{wrapfigure}

\paragraph{Block-bidirectional attention.}
We make attention bidirectional within the noisy block, allowing partially denoised later tokens to inform earlier ones (\Cref{fig:attention_mask}).
Block positions also attend to the entire clean prefix, whose attention remains causal and excludes the noisy block~\citep{tian2025next,jin2026mars}.
Together with conditioning prefix positions on time $1$, this makes their keys and values independent of the block state $({\bf x},t)$, allowing reuse across denoising steps and subsequent blocks.
The prefix outputs retain their AR role: $f_t({\bf x};{\bf c})^i$ predicts ${\bf c}^{i+1}$ for $i<N$, and the first block token ${\bf y}^1$ for $i=N$.
We write these block-independent predictions as $f({\bf c})^i$ and use them to verify the denoiser's drafts in self-speculative decoding (\Cref{sec:inference_schemes}).
This requires retaining or improving pretrained AR capabilities, which we achieve through training.
We find that bidirectional attention within blocks outperforms causal (\Cref{tab:design-ablations}).

\paragraph{Shared and learned embeddings.}
We initialize the vocabulary embeddings ${\bf E}$ from the pretrained model, share them between the prefix and the embedded flow, and train them jointly with the denoiser.
Unconstrained updates, however, can inflate embedding norms relative to the fixed noise scale $\sigma$, making denoising easier by reducing the effective corruption~\citep{dieleman2022cdcd,lee2026flm}.
We therefore fix the L2 norm of each embedding ${\bf E}_v$ to its pretrained value and learn only its direction, allowing adaptation to the flow while preserving the embedding scale.
This outperforms both frozen and unconstrained embeddings (\Cref{tab:design-ablations}).

\paragraph{Training loss.}
We train the block denoiser $\hat{D}_t({\bf x};{\bf c})^i\coloneqq f_t({\bf x};{\bf c})^{N+i}$ jointly with AR on the prefix.
For each training pair $({\bf c},{\bf y})$, we draw $t\sim\mathsf{U}[0,1]$ and noise ${\bf x}_0\sim p_0$, and build $I_t$ via \Cref{eq:async_interpolant} with ${\bf x}_1^i={\bf E}_{{\bf y}^i}$.
We minimize
\begin{equation}\label{eq:loss}
    \mathcal{L}(\hat{D})
    \coloneqq \mathbb{E}_{{\bf c},{\bf y},t,{\bf x}_0}\Bigg[
    \underbrace{\sum_{i=1}^N-\log f({\bf c})^i_{{\bf c}^{i+1}}}_{\text{AR cross entropy}}
    +\underbrace{\sum_{i=1}^B-\log \hat{D}_t(I_t;{\bf c})^i_{{\bf y}^i}}_{\text{denoising cross entropy}}
    +\underbrace{\sum_{i=1}^B
    \bigl|\hat{D}_t(I_t;{\bf c})^i{\bf E}-{\bf E}_{{\bf y}^i}\bigr|^2}_{\text{embedding squared error}}
    \Bigg],
\end{equation}
where ${\bf c}^{N+1}\coloneqq{\bf y}^1$, and the AR and denoising predictions share the transformer parameters.
The denoising cross entropy supervises token probabilities \Cref{eq:denoising_classifier}, while the squared error supervises the mean embedding that determines the flow velocity \Cref{eq:async_velocity}.
Combining these denoising losses improves flow accuracy, and we unexpectedly find that AR supervision improves it further (\Cref{tab:design-ablations}).
Notably, joint training also improves AR accuracy over matched AR SFT (\Cref{sec:exp_ar}), consistent with the benefits of multi-token prediction~\citep{gloeckle2024better,deepseekv3,mahajan2026beyond}.

\begin{figure}[t!]
\centering
\includegraphics[width=\linewidth]{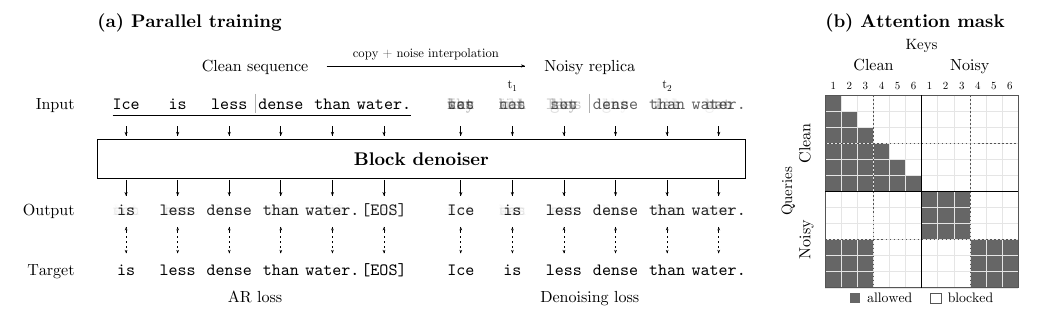}
\caption{Parallelized training over a sequence composed of two blocks of size three.
The uncorrupted sequence uses causal attention and the AR loss, while each noisy block attends to the preceding clean tokens and bidirectionally within itself, and uses the denoising losses.
}
\label{fig:training}
\end{figure}

\paragraph{Parallel training.}
To avoid recomputing overlapping prefixes, we train all blocks of a sequence in parallel.
Following prior work on block diffusion~\citep{arriola2025bd3lm,tian2025next,jin2026mars,liu2025wedlm}, we pair each clean training sequence with a copy partitioned into noisy blocks of size $B$ (\Cref{fig:training}).
Clean positions attend causally, while each noisy block attends only to the preceding clean tokens and bidirectionally within itself.
A single forward pass then produces AR predictions for the clean sequence and denoising predictions for every block (details in \Cref{app:training}).

\begin{wrapfigure}[10]{r}{0.32\textwidth}
\centering
\vspace{-1em}
\includegraphics[width=0.85\linewidth]{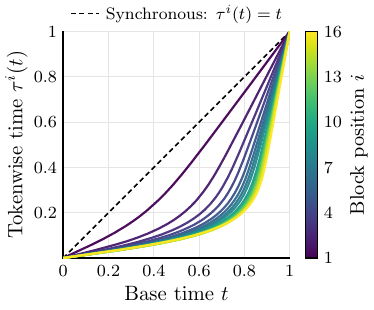}
\caption{Tokenwise time for each block position learned by \ours{} 7B.}
\label{fig:learned_time}
\end{wrapfigure}
\paragraph{Learning tokenwise time.}
We learn the tokenwise time $\tau:[0,1]\to[0,1]^B$ from the denoising loss during training.
Inspired by the loss-aware time warping of CDCD~\citep{dieleman2022cdcd}, we construct each $\tau^i$ as the inverse CDF of the normalized denoising loss across noise levels, estimated online for position $i$ (details in \Cref{app:time_reparam}).
This slows progress where the loss is high, so uniform sampling and discretization of base time $t$ allocate more training samples and integration steps to difficult noise levels.
The learned schedules generally satisfy $\tau^i(t)>\tau^j(t)$ for $i<j$ (\Cref{fig:learned_time}), with earlier positions denoised ahead of later ones, even though this left-to-right bias is not manually imposed.

\paragraph{Self-conditioning.}
Following prior work~\citep{hu2026elf,chen2026langflow,yoo2026self}, we use self-conditioning~\citep{chen2022analog} to improve sample quality.
With a fixed probability during training, we obtain a denoising prediction without gradients, feed it back as an additional input, and compute the loss on a second forward pass (details in \Cref{app:self_conditioning}).
During flow decoding, each Euler step reuses the previous step's prediction for conditioning, requiring no added forward.

\paragraph{Flow map distillation.}
Flow decoding requires multiple denoiser evaluations per block.
To reduce this cost, we distill the learned flow into a flow map $X_{s,t}$ that transports a state directly from time $s$ to $t$~\citep{boffi2026build}.
For this, we make a novel extension of simplex-valued two-time denoisers~\citep{lee2026flm} to asynchronous flows over vocabulary embeddings.
Specifically, $\delta_{s,t}({\bf x};{\bf c})\in(\Delta^{|V|-1})^B$ predicts token probabilities, which we map through ${\bf E}$ before interpolating with the noisy state ${\bf x}$:
\begin{equation}\label{eq:two_time_denoiser}
    X_{s,t}({\bf x};{\bf c})^i
    \coloneqq
    \frac{1-\tau^i(t)}{1-\tau^i(s)}{\bf x}^i
    +\frac{\tau^i(t)-\tau^i(s)}{1-\tau^i(s)}
    \,\delta_{s,t}({\bf x};{\bf c})^i{\bf E},
    \qquad i\in[B],\quad 0\leq s<t\leq1.
\end{equation}
This retains categorical predictions for distillation while transporting states in embedding space.
We initialize the student $\hat{\delta}_{s,t}$ from the frozen teacher $\hat{D}_s$, adding an identity-initialized adaLN for the destination time $t$.
Following semigroup-based distillation~\citep{lee2026flm,yoo2026self}, we match the student's diagonal predictions at $s=t$ to the teacher $\hat{D}_s$ and its off-diagonal predictions at $s\neq t$ to a stop-gradient target formed by composing two shorter student transitions.
We use KL losses for both terms and additionally distill the teacher's prefix AR predictions.
The student preserves prefix caching and supports generation in as few as one step through $\hat{X}_{0,1}$.
The derivation and full objective are in \Cref{app:embedded_flow_maps}.

\subsection{Inference schemes}
\label{sec:inference_schemes}
We generate each block by integrating the learned flow or evaluating its distilled flow map.
We further accelerate generation through self-speculative decoding, using the retained AR predictor $f$ to verify flow-based drafts.
All schemes reuse the prefix key-value (KV) cache and extend it with newly committed tokens.

\paragraph{Flow and flow map decoding.}
Starting from $\hat{\bf x}_0\sim p_0$, we take $K$ steps on the grid of base times $t_k=k/K$, which induces an asynchronous temporal grid $\tau(t_k)\in [0,1]^B$.
With ${\bf p}_k\coloneqq\hat{D}_{t_k}(\hat{\bf x}_{t_k};{\bf c})$, an Euler step gives
\begin{equation}\label{eq:block_flow_ode}
    \hat{\bf x}_{t_{k+1}}^i
    =\hat{\bf x}_{t_k}^i
    +\frac{\tau^i(t_{k+1})-\tau^i(t_k)}{1-\tau^i(t_k)}
    ({\bf p}_k^i{\bf E}-\hat{\bf x}_{t_k}^i),
    \qquad
    \hat{\bf y}^i=\mathsf{argmax}({\bf p}_{K-1}^i),
    \qquad i\in[B],
\end{equation}
where the final prediction determines the generated tokens $\hat{\bf y}$.
Each step reuses the previous prediction for self-conditioning.
We further strengthen prefix conditioning with prefix guidance~\citep{song2025history}, analogous to classifier-free guidance~\citep{ho2022classifier} (details in \Cref{app:prefix_guidance}).
This adds one prefix-free evaluation per step, which is efficient as it processes only the $B$ block tokens.
The distilled flow map model replaces each Euler step with a learned jump, which amounts to using $\hat{\delta}_{t_k,t_{k+1}}$ in place of $\hat{D}_{t_k}$ in \Cref{eq:block_flow_ode}:
\begin{equation}\label{eq:map_inference}
    \hat{\bf x}_{t_{k+1}}
    =\hat{X}_{t_k,t_{k+1}}(\hat{\bf x}_{t_k};{\bf c}),
    \qquad
    \hat{\bf y}^i
    =\mathsf{argmax}\bigl(\hat{\delta}_{t_{K-1},1}(\hat{\bf x}_{t_{K-1}};{\bf c})^i\bigr),
    \qquad i\in[B].
\end{equation}
Setting $K=1$ gives one-step block generation through $\hat{X}_{0,1}$.

\paragraph{Self-speculative decoding.}
We draft a block with one Euler step in tokenwise time \Cref{eq:block_flow_ode}, without prefix guidance.
Since $\tau^i(0)=0$ and $\tau^i(1)=1$, this reduces to a single denoiser evaluation from $\hat{\bf x}_0\sim p_0$:
\begin{equation}\label{eq:denoiser_draft}
    {\bf p}^i\coloneqq\hat{D}_0(\hat{\bf x}_0;{\bf c})^i,
    \qquad
    \hat{\bf x}_1^i={\bf p}^i{\bf E},
    \qquad
    \hat{\bf y}^i=\mathsf{argmax}({\bf p}^i),
    \qquad i\in[B].
\end{equation}
One-step drafts can be inconsistent across positions, so we verify them with the retained AR predictor $f$~\citep{stern2018blockwise}.
We commit the longest draft prefix agreeing with greedy AR predictions, and if the first token disagrees, we instead commit $\mathsf{argmax}(f({\bf c})^N)$.
The resulting output therefore matches greedy AR decoding with $f$.
To improve efficiency, following fused drafting and verification schemes~\citep{stern2018blockwise}, we verify the previous draft while generating candidate next drafts with one-step flow decoding in a single forward pass.
Here, each candidate conditions on a possible accepted prefix, and after verification, we keep the candidate corresponding to the verified acceptance length (details in \Cref{app:denoiser_inference}).
As an alternative to self-speculation, a draft may be accepted by confidence thresholding~\citep{jin2026mars}, which we detail in \Cref{app:confidence_results}.

\section{Experiments}
\label{sec:experiments}

In our experiments, we evaluate whether \ours{} learns flows that reach AR quality through data-efficient adaptation, and whether flow map distillation improves few-step generation (\Cref{sec:exp_flow}).
We then examine how flow training affects AR accuracy (\Cref{sec:exp_ar}) and use the retained AR predictor for self-speculative decoding, measuring both parallelism and wall-clock throughput (\Cref{sec:exp_selfspec}).
Finally, we ablate the main training and inference choices (\Cref{sec:exp_ablations}).
Evaluation details are in \Cref{app:eval_details}.


\paragraph{Models and data.}
We build \ours{} at 0.5B and 7B by adapting the corresponding Qwen2.5 Instruct checkpoints~\citep{qwen2025qwen25technicalreport} as described in \Cref{sec:method}.
At 8B, we use Qwen3 8B Base~\citep{yang2025qwen3technicalreport}, as further training of its heavily post-trained Instruct counterpart degraded performance in our experiments.
We train all models on Dolci-Instruct-SFT~\citep{olmo2025olmo} for 5--15 epochs (9--27B tokens), using at least $18\times$ fewer training tokens than the reported budgets of Dream, LLaDA, and Efficient-DLM at comparable scales~\citep{dream2024,llada2025,fu2026edlm}.
We use 0.5B for ablations, 7B for the main results, and 8B for scale-matched comparisons with additional baselines.

\paragraph{Baselines.}
We compare against multi-token prediction, block-parallel decoding, and discrete diffusion methods that represent existing approaches for parallel language modeling (\Cref{sec:related}).
At 0.5B, we evaluate MTP~\citep{gloeckle2024better}, DeepSeek-MTP~\citep{deepseekv3}, MARS~\citep{jin2026mars}, SBD~\citep{gat2025set}, BD3-LM~\citep{arriola2025bd3lm}, and PTP~\citep{draxler2026parallel}.
At 7--8B, we additionally compare against Fast-dLLM~v2~\citep{wu2026fast}, Dream~\citep{dream2024}, LLaDA~\citep{llada2025}, and Efficient-DLM~\citep{fu2026edlm}.
Baselines marked $^\dagger$ are trained by us with the same AR initialization, data, and training budget as \ours{}, providing controlled comparisons.
The remaining baselines use released checkpoints with different training data and budgets.
We also fine-tune each AR initialization on the same data to obtain AR SFT, our reference for both quality and speed.
Unless stated otherwise, we use each method's native decoding protocol.

\paragraph{Benchmarks.}
We evaluate on six benchmarks spanning instruction following, reasoning, and code generation: IFEval (0-shot)~\citep{zhou2023instruction}, BBH (3-shot)~\citep{suzgun2023challenging}, MMLU-Pro (0-shot)~\citep{wang2024mmlu}, GPQA (0-shot)~\citep{rein2023gpqa}, GSM8K (0-shot)~\citep{cobbe2021training}, and HumanEval (0-shot)~\citep{chen2021evaluating}.
%
%
These downstream tasks assess generation quality without relying on generative perplexity, whose limitations are documented by \citet{franca2026hacking}.
All methods are evaluated with lm-evaluation-harness~\citep{eval-harness} using each benchmark's default prompts.
We report prompt-level strict accuracy for IFEval, pass@1 for HumanEval, and accuracy for the remaining tasks.
Generation is capped at 1024 tokens per sample, except 512 for GSM8K and 640 for HumanEval.
We report six-benchmark macro-averages for single-token accuracy and self-speculative accuracy and TPF (\Cref{tab:ntp_main,fig:selfspec_frontier,tab:selfspec_lafa_sixbench}).
For flow and flow map decoding, throughput, and ablations, we average over IFEval, GSM8K, and HumanEval, covering instruction following, math, and coding, considering their more costly evaluations.


\subsection{Flow and flow map decoding}
\label{sec:exp_flow}

\begin{figure}[t!]
\centering
\vspace{-1.5em}
\begin{minipage}[c]{0.48\textwidth}
\centering
\includegraphics[width=\linewidth]{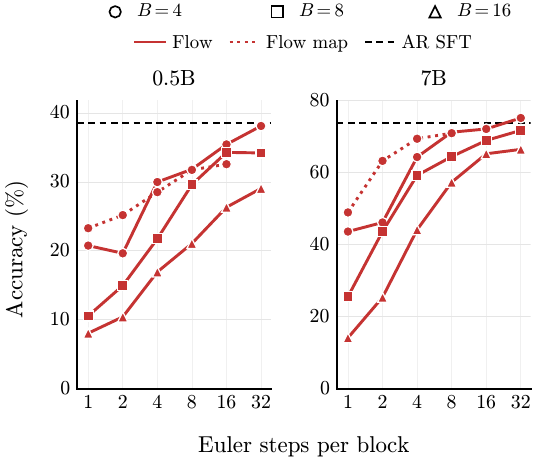}
\caption{Flow and flow map decoding results.
Accuracy averaged over IFEval, GSM8K, and HumanEval.}
\label{fig:multitoken_frontier}
\end{minipage}\hfill
\begin{minipage}[c]{0.48\textwidth}
\centering
\vspace{-2em}
\footnotesize
\captionof{table}{Single-token decoding accuracy (\%)~$\uparrow$ averaged across all six benchmarks.
$\dagger$ denotes matched initialization, data, and training.}
\label{tab:ntp_main}
\begin{tabular}{lccc}
\toprule
{Method} & {0.5B} & {7B} & {8B} \\
\midrule
AR SFT$^\dagger$ & 28.7 & 58.4 & 56.8 \\
MTP$^\dagger$~\citep{gloeckle2024better} & 25.9 & 53.9 & -- \\
DeepSeek-MTP$^\dagger$~\citep{deepseekv3} & 30.2 & \underline{60.0} & -- \\
MARS$^\dagger$~\citep{jin2026mars} & \underline{30.4} & 59.7 & -- \\
SBD$^\dagger$~\citep{gat2025set} & 27.6 & -- & -- \\
BD3-LM$^\dagger$~\citep{arriola2025bd3lm} & 22.8 & -- & -- \\
Fast-dLLM~v2$^\dagger$~\citep{wu2026fast} & -- & 49.3 & -- \\
Fast-dLLM~v2~\citep{wu2026fast} & -- & 54.4 & -- \\
Dream~\citep{dream2024} & -- & 55.6 & -- \\
LLaDA~\citep{llada2025} & -- & -- & 43.5 \\
Efficient-DLM~\citep{fu2026edlm} & -- & -- & \textbf{63.5} \\
\rowcolor{blue!5} \textbf{\ours{} (Ours)$^\dagger$}  & \textbf{31.6} & \textbf{60.9} & \underline{63.1} \\
\bottomrule
\end{tabular}
\end{minipage}
\end{figure}



We first evaluate flow generation with block sizes $B\in\{4,8,16\}$, varying the number of Euler integration steps in \Cref{eq:block_flow_ode}.
We also test whether flow map distillation improves few-step generation.
\Cref{fig:multitoken_frontier} summarizes the results (per-benchmark scores in \Cref{app:full_results}).
Across scales and block sizes, flow accuracy generally improves with more integration steps, demonstrating effective iterative refinement.
With sufficient steps, flow decoding with block size four approaches AR SFT accuracy at 0.5B and surpasses it at 7B.
At a fixed number of steps, larger blocks generally trade accuracy for greater parallelism.
For flow map decoding, we distill the models with block size four and observe improved accuracy in the one- and two-step regimes.
The gains are largest at 7B, where the two-step student approaches the four-step teacher; the student also outperforms the teacher at four steps and reaches similar accuracy at eight steps.
Nevertheless, a quality gap remains relative to flow decoding with more integration steps.
We leave better incorporation of self-conditioning and prefix guidance into flow map distillation to future work.

\subsection{Autoregressive decoding}
\label{sec:exp_ar}

We next examine how joint flow training affects the retained AR capability (\Cref{sec:training}), evaluating \ours{} and baselines under single-token decoding.
As shown in \Cref{tab:ntp_main}, \ours{} improves average accuracy over matched AR SFT across all three model scales (per-benchmark results in \Cref{app:full_results}).
It also outperforms the matched multi-token and block-parallel baselines on average at 0.5B and 7B.
At 8B, \ours{} achieves similar accuracy to Efficient-DLM evaluated with single-token unmasking, while using at least $18\times$ fewer training tokens.
These results show that learning the flow can also improve AR prediction, consistent with the benefits of multi-token training~\citep{gloeckle2024better,deepseekv3,mahajan2026beyond}.
We therefore use this retained AR predictor to verify flow-based drafts for self-speculative decoding (\Cref{sec:exp_selfspec}).

\subsection{Self-speculative decoding}
\label{sec:exp_selfspec}

\begin{figure}[t!]
\centering
\includegraphics[width=\linewidth]{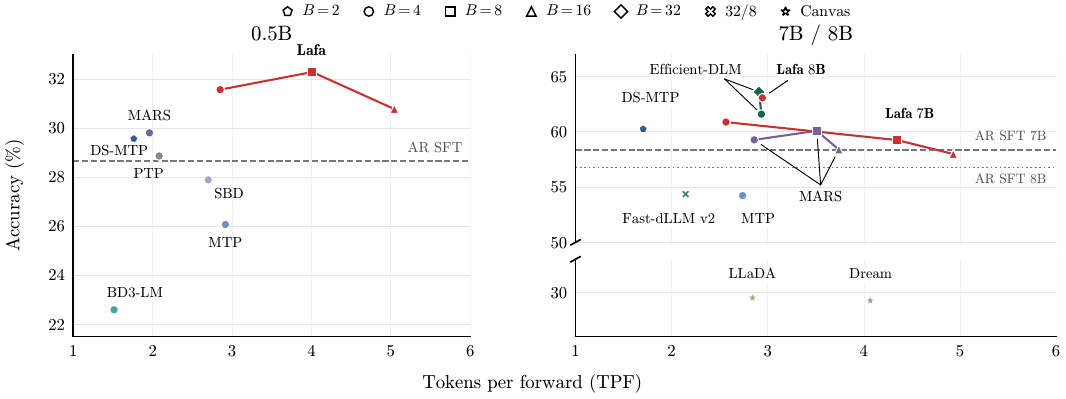}
\caption{\textbf{Self-speculative decoding.} Accuracy $\uparrow$ vs. TPF $\uparrow$ averaged over all six benchmarks.
In the right panel, Efficient-DLM and LLaDA are 8B and the other baselines are 7B.
``Canvas'' denotes iterative completion over the full sequence, while ``32/8'' denotes decoding 32-token blocks in four 8-token sub-blocks.
Further details in \Cref{app:full_results}.
}
\label{fig:selfspec_frontier}
\end{figure}

\begin{table}[t!]
\centering
\footnotesize
\caption{Self-speculative decoding accuracy (\%) $\uparrow$ and TPF $\uparrow$.}
\label{tab:selfspec_lafa_sixbench}
\resizebox{\linewidth}{!}{%
\begin{tabular}{lccccccccccccccc}
\toprule
 & & \multicolumn{2}{c}{IFEval} & \multicolumn{2}{c}{BBH} & \multicolumn{2}{c}{MMLU-Pro} & \multicolumn{2}{c}{GPQA} & \multicolumn{2}{c}{GSM8K} & \multicolumn{2}{c}{HumanEval} & \multicolumn{2}{c}{Average} \\
\cmidrule(lr){3-4}\cmidrule(lr){5-6}\cmidrule(lr){7-8}\cmidrule(lr){9-10}\cmidrule(lr){11-12}\cmidrule(lr){13-14}\cmidrule(lr){15-16}
Method & $B$ & Acc. & TPF & Acc. & TPF & Acc. & TPF & Acc. & TPF & Acc. & TPF & Acc. & TPF & Acc. & TPF \\
\midrule
\rowcolor{gray!10} \multicolumn{16}{c}{\emph{0.5B scale}} \\
Qwen2.5 SFT & -- & 48.4 & 1.0 & 26.3 & 1.0 & \underline{11.9} & 1.0 & 17.9 & 1.0 & 32.0 & 1.0 & 35.4 & 1.0 & 28.7 & 1.0 \\
\rowcolor{blue!5} & 4 & \underline{55.1} & 2.2 & \underline{27.6} & 3.3 & \textbf{12.1} & 3.1 & \underline{21.9} & 2.9 & 36.1 & 2.7 & 36.6 & 2.9 & \underline{31.6} & 2.9 \\
\rowcolor{blue!5} & 8 & \textbf{55.3} & 2.7 & \textbf{27.9} & 5.1 & 10.0 & 5.5 & \textbf{23.7} & 4.2 & \textbf{37.8} & 3.6 & \underline{39.0} & 3.0 & \textbf{32.3} & 4.0 \\
\rowcolor{blue!5} \multirow{-3}{*}{\textbf{\ours{} (Ours)}} & 16 & 54.2 & 3.1 & 27.0 & 6.3 & 9.1 & 9.2 & 18.1 & 4.8 & \underline{36.7} & 3.6 & \textbf{39.6} & 3.3 & 30.8 & 5.0 \\
\midrule
\rowcolor{gray!10} \multicolumn{16}{c}{\emph{7B scale}} \\
Qwen2.5 SFT & -- & 73.8 & 1.0 & \underline{59.2} & 1.0 & \underline{41.6} & 1.0 & 27.9 & 1.0 & 69.1 & 1.0 & \textbf{78.7} & 1.0 & 58.4 & 1.0 \\
\rowcolor{blue!5} & 4 & \textbf{78.9} & 1.7 & \textbf{63.0} & 3.4 & \textbf{41.8} & 2.4 & \textbf{31.0} & 2.3 & 73.8 & 2.8 & \underline{76.8} & 2.8 & \textbf{60.9} & 2.6 \\
\rowcolor{blue!5} & 8 & \underline{77.5} & 2.8 & 59.1 & 6.3 & 39.4 & 3.7 & \underline{28.1} & 4.0 & \textbf{75.3} & 4.8 & 76.2 & 4.6 & \underline{59.3} & 4.3 \\
\rowcolor{blue!5} \multirow{-3}{*}{\textbf{\ours{} (Ours)}} & 16 & 74.9 & 2.8 & 57.3 & 7.9 & 39.0 & 3.8 & \underline{28.1} & 4.0 & \underline{75.0} & 5.8 & 73.8 & 5.4 & 58.0 & 4.9 \\
\midrule
\rowcolor{gray!10} \multicolumn{16}{c}{\emph{8B scale}} \\
Qwen3 SFT & -- & 38.1 & 1.0 & \textbf{71.6} & 1.0 & \textbf{56.2} & 1.0 & 29.5 & 1.0 & 62.9 & 1.0 & \textbf{82.3} & 1.0 & 56.8 & 1.0 \\
\rowcolor{blue!5} \textbf{\ours{} (Ours)} & 4 & \textbf{62.3} & 2.4 & 70.7 & 3.4 & \textbf{56.2} & 3.2 & \textbf{31.7} & 3.1 & \textbf{77.8} & 2.6 & 79.9 & 2.9 & \textbf{63.1} & 2.9 \\
\bottomrule
\end{tabular}}
\end{table}

We now evaluate self-speculative decoding, which verifies one-step flow drafts with the retained AR predictor (\Cref{sec:inference_schemes}).
We measure both parallelism, in tokens committed per forward pass (TPF), and end-to-end wall-clock throughput.
Each block size uses its corresponding trained checkpoint, and self-speculation preserves that checkpoint's greedy AR output.

\paragraph{Parallelism.}
\Cref{fig:selfspec_frontier,tab:selfspec_lafa_sixbench} report accuracy and TPF averaged over all six benchmarks (protocols detailed in \Cref{app:full_results}).
At 0.5B, \ours{} with block size eight achieves about twice the TPF of DeepSeek-MTP, MARS, and PTP while attaining higher average accuracy.
At 7B, increasing the block size from four to sixteen nearly doubles TPF while keeping accuracy close to AR SFT.
With block size four, \ours{} exceeds DeepSeek-MTP in both accuracy and TPF; with block size eight, it matches the accuracy of MARS with block size four at about $1.5\times$ its TPF.
At 8B, \ours{} achieves similar accuracy and TPF to Efficient-DLM despite using at least $18\times$ fewer training tokens.
Across scales, \ours{} achieves approximately 2.5--5 TPF while remaining close to or above AR SFT accuracy.


\begin{wrapfigure}[19]{r}{0.5\textwidth}
\centering
\includegraphics[width=\linewidth]{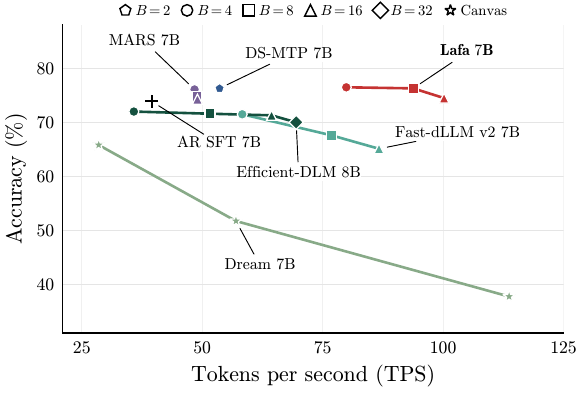}
\caption{Accuracy $\uparrow$ vs. end-to-end throughput measured in tokens per second (TPS) $\uparrow$ at 7--8B scales, averaged over IFEval, GSM8K, and HumanEval.
}
\label{fig:throughput}
\end{wrapfigure}

\paragraph{Wall-clock throughput.}
Since forward-pass costs differ across methods, we also measure end-to-end throughput at 7--8B scales on a single H100 GPU with batch size one, using each method's native decoder.
\Cref{fig:throughput} reports accuracy and throughput in tokens per second (TPS), averaged over IFEval, GSM8K, and HumanEval (settings and per-benchmark results in \Cref{app:throughput}).
With block size sixteen, \ours{} 7B achieves $2.54\times$ the throughput of 7B AR SFT at comparable average accuracy.
With block size four, it is about twice as fast while also more accurate on average.
DeepSeek-MTP and MARS match or exceed AR SFT accuracy but achieve at most $1.4\times$ its throughput.
Diffusion baselines fall below 7B AR SFT accuracy, faster settings showing larger losses.
Thus, while flow decoding requires multiple integration steps to reach AR quality (\Cref{sec:exp_flow}), verifying one-step flow drafts enables substantially faster generation at comparable quality.

Accepting drafts by confidence thresholding instead of AR verification trades accuracy for speed.
We therefore use self-speculation as our default fast decoding mode and report confidence decoding results in \Cref{app:confidence_results}.

\subsection{Ablations}
\label{sec:exp_ablations}

\begin{wraptable}[18]{r}{0.46\textwidth}
\vspace{-4em}
\caption{\textbf{Ablations.}
We use flow decoding and report mean accuracy over IFEval, GSM8K, and HumanEval.
}
\label{tab:design-ablations}
\footnotesize
\centering
\begin{tabular}{p{\dimexpr0.64\linewidth-2\tabcolsep\relax}>{\centering\arraybackslash}p{\dimexpr0.36\linewidth-2\tabcolsep\relax}}
\toprule
Configuration & Average (\%) \\
\midrule
\multicolumn{2}{c}{\emph{Training}} \\
\midrule
Fully causal attention & 32.7 \\
\rowcolor{blue!5} \textbf{Block-bidirectional attention} & \textbf{38.0} \\
\midrule
Synchronous time & 36.2 \\
\rowcolor{blue!5} \textbf{Asynchronous tokenwise time} & \textbf{38.0} \\
\midrule
Frozen embeddings & 34.4 \\
Unconstrained training & 36.9 \\
\rowcolor{blue!5} \textbf{Norm-frozen training} & \textbf{38.0} \\
\midrule
Denoising CE & 33.6 \\
Denoising MSE & 29.7 \\
Denoising CE \& MSE & 35.8 \\
\rowcolor{blue!5} \textbf{Denoising CE \& MSE + AR} & \textbf{38.0} \\
\midrule
\multicolumn{2}{c}{\emph{Inference}} \\
\midrule
No self-conditioning & 32.3 \\
\rowcolor{blue!5} \textbf{Self-conditioning} & \textbf{38.1} \\
\midrule
No guidance & 34.3 \\
\rowcolor{blue!5} \textbf{Prefix guidance} & \textbf{38.1} \\
\bottomrule
\end{tabular}
\end{wraptable}

We ablate the main training and inference choices in \Cref{sec:method} using flow decoding to assess their effect on the learned flow.
All comparisons use 0.5B models with block size four and 32 Euler integration steps (\Cref{tab:design-ablations}).
For training, block-bidirectional attention and learned tokenwise time both improve accuracy over their causal and synchronous counterparts.
Learning embedding directions while fixing their norms outperforms both frozen embeddings and unconstrained updates.
Combining denoising cross entropy and embedding squared error improves accuracy over either objective alone.
Adding prefix AR loss improves flow accuracy further, showing that retaining AR capabilities need not compromise flow generation.
For inference, both self-conditioning and prefix guidance improve accuracy.
Self-conditioning provides the larger gain while requiring no additional model evaluations during decoding.
These results motivate better incorporation of both components into the flow map.

\section{Related work}
\label{sec:related}

In this section, we review existing representative approaches for parallel language modeling.

\paragraph{Block-autoregressive language models.}
Block-autoregressive models reduce sequential generation by producing blocks of tokens conditioned on preceding blocks.
Approaches to parallel block generation include multi-token prediction (MTP) heads or modules~\citep{gloeckle2024better,deepseekv3}, and predictors that take masked or noisy blocks as input~\citep{arriola2025bd3lm,gat2025set,jin2026mars}.
Many parallel decoders predict tokens independently within an update, which can degrade quality when many tokens are committed at once~\citep{lee2026flm}.
Some methods mitigate this through AR verification or confidence-based acceptance of a draft prefix~\citep{stern2018blockwise,deepseekv3,jin2026mars}.
Some recent methods learn deterministic noise-to-token maps that can capture joint dependencies without independent output sampling~\citep{draxler2026parallel,zhang2026gumbel}.

\paragraph{Discrete diffusion language models.}
Discrete diffusion models generate text by reversing a discrete corruption process, such as masking or random token replacement, through iterative denoising~\citep{sahoo2024simple,shi2024simplified,lou2024discrete,schiff2025simple}.
As with other parallel decoders, independent sampling within each update can limit quality in few-step generation~\citep{austin2021d3pm,gat2024discrete}.
These models have been scaled to billions of parameters~\citep{dream2024,fu2026edlm,llada2025,team2026diffusiongemma}, with generation performed over entire sequences or individual blocks.
Block diffusion~\citep{arriola2025bd3lm} combines iterative denoising within each block with autoregressive generation and prefix caching across blocks.

\paragraph{Continuous diffusion and flow language models.}
Continuous diffusion and flow models generate text through noisy continuous representations, including token embeddings, simplex representations, and binary encodings~\citep{li2022diffusionlm,dieleman2022cdcd,han2023ssd,wu2023ar,strudel2022self,chen2022analog,lovelace2023latent,gulrajani2024plaid}.
Continuous flows transport joint sequence representations and admit flow maps that can be learned for few-step generation~\citep{boffi2026build,lee2026flm}.
Recent work has demonstrated competitive generation quality and improved few-step decoding~\citep{lee2026flm,chen2026langflow,hu2026elf,batzolis2026cobit,yang2026continuous,agarwal2026posterior,yoo2026self}, while a few recent efforts have explored billion-parameter scales~\citep{tae2025tess,davis2026scaling,zhuang2026bitlm,yu2026dsl}.
Related approaches also study autoregression and diffusion over autoencoded continuous representations~\citep{shao2025continuous,guo2026continuous,jiang2026textldm,liang2026aurora}.
Realizing the benefits of continuous flows at large and practical scale calls for data-efficient training and fast inference.
\ours{} addresses both by aligning blockwise asynchronous flows with the causal structure of pretrained AR models.
This enables efficient adaptation and prefix caching, while retained AR prediction supports self-speculative generation.

\section{Conclusion}
\label{sec:conclusion}

We introduced \ours{}, a blockwise flow language model scaled to 8B parameters by adapting pretrained AR models with fewer than 30B training tokens.
\ours{} combines causal generation across blocks with learned asynchronous flows within each block, preserving prefix caching while enabling parallel generation.
With sufficient integration steps, flow decoding approaches or surpasses AR SFT accuracy, and flow map distillation improves few-step generation.
Joint training also improves AR accuracy over matched AR SFT.
This retained AR predictor verifies one-step flow drafts within the same model, enabling self-speculative decoding.
With this approach, \ours{} 7B achieves $2.54\times$ the throughput of the 7B AR SFT baseline at comparable average accuracy.
These results demonstrate a practical path to scaling continuous flows for language modeling through data-efficient AR adaptation.
Improving flow map distillation, particularly its incorporation of self-conditioning and prefix guidance, remains a promising direction for further accelerating flow generation.

\subsection*{Acknowledgments}

The authors would like to thank Sheel Shah, Manan Agarwal, Nick Boffi, Sander Dieleman, Jiaxin Shi, and Kyunghyun Cho for valuable discussions.

\bibliographystyle{unsrtnat}
\bibliography{main}

\newpage
\appendix

\section{Theoretical results}
\label{app:proofs}

\subsection{Asynchronous flow velocity and denoiser}
\label{app:proof_async_velocity}

In this subsection, we derive the relationship between the velocity field and the denoiser of the asynchronous flow \Cref{eq:async_velocity} used in \Cref{sec:async_block_flows}.

\begin{proposition}
\label{prop:async_velocity}
For the interpolant \Cref{eq:async_interpolant}, the velocity $b_t({\bf x};{\bf c})=\mathbb{E}[\dot I_t\mid{\bf c},I_t={\bf x}]$ satisfies, at each position $i$,
\begin{equation*}
    b_t({\bf x};{\bf c})^i=\frac{\dot\tau^i(t)}{1-\tau^i(t)}\big(D_t({\bf x};{\bf c})^i{\bf E}-{\bf x}^i\big),
    \qquad t\in[0,1).
\end{equation*}
\end{proposition}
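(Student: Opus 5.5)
The plan is a direct computation: differentiate the interpolant \Cref{eq:async_interpolant} in $t$, then eliminate the noise ${\bf x}_0$ using the constraint $I_t={\bf x}$, and finally take conditional expectations. Positionwise, $\dot I_t^i=\dot\tau^i(t)\,({\bf x}_1^i-{\bf x}_0^i)$. For $t\in[0,1)$ every coordinate satisfies $\tau^i(t)<1$. This uses that $\tau^i$ is increasing with $\tau^i(1)=1$, so strict monotonicity, or at least $\tau^i(t)<1$ on $[0,1)$, is needed to avoid dividing by zero; I would state this explicitly as the standing interpretation of ``increasing''. On the event $I_t={\bf x}$ we can solve ${\bf x}_0^i=({\bf x}^i-\tau^i(t){\bf x}_1^i)/(1-\tau^i(t))$. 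Substituting gives
\[
\dot I_t^i=\frac{\dot\tau^i(t)}{1-\tau^i(t)}\bigl({\bf x}_1^i-{\bf x}^i\bigr)
\]
pathwise on that event. The identity is purely algebraic, holding for every realization, so no distributional assumption on ${\bf x}_0$ is needed at this stage.

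Next, take $\mathbb{E}[\,\cdot\mid{\bf c},I_t={\bf x}]$ of both sides. The prefactor is deterministic and ${\bf x}^i$ is measurable with respect to the conditioning, so
\[
b_t({\bf x};{\bf c})^i=\frac{\dot\tau^i(t)}{1-\tau^i(t)}\bigl(\mathbb{E}[{\bf x}_1^i\mid{\bf c},I_t={\bf x}]-{\bf x}^i\bigr).
\]
Since ${\bf x}_1^i={\bf E}_{{\bf y}^i}=e_{{\bf y}^i}^\top{\bf E}$ with $e_v$ the one-hot vector, linearity gives
\[
\mathbb{E}[{\bf x}_1^i\mid{\bf c},I_t={\bf x}]=\sum_{v\in V}p({\bf y}^i=v\mid{\bf c},I_t={\bf x})\,{\bf E}_v=D_t({\bf x};{\bf c})^i{\bf E},
\]
by the definition \Cref{eq:denoising_classifier}. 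This yields the claim.

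The only delicate point is rigor in conditioning on the measure-zero event $I_t={\bf x}$. I would handle it by noting that ${\bf x}_0$ is Gaussian with $\sigma>0$ and $\tau^i(t)<1$, so $I_t$ given $({\bf c},{\bf y})$ has a smooth positive density. The posterior $p({\bf y}\mid{\bf c},I_t={\bf x})$ is then well defined via Bayes' rule over the finite set $V^B$. Each conditional expectation above is an explicit finite sum weighted by these Gaussian likelihoods, so the pathwise substitution can be carried out inside that sum term by term. The result is the same formula, valid for every ${\bf x}$ rather than almost everywhere.
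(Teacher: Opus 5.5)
Your proposal is correct and follows the paper's proof essentially step for step: differentiate the interpolant, eliminate ${\bf x}_0$ by solving $I_t^i=(1-\tau^i(t)){\bf x}_0^i+\tau^i(t){\bf x}_1^i$, pull the deterministic prefactor $\dot\tau^i(t)/(1-\tau^i(t))$ out of the conditional expectation, and identify the posterior mean embedding with $D_t({\bf x};{\bf c})^i{\bf E}$. Your two extra points, which the paper leaves implicit, are sound: the requirement $\tau^i(t)<1$ on $[0,1)$ follows from the appendix assumption $\dot\tau^i>0$ for $t<1$, and the conditioning on $I_t={\bf x}$ is well defined through the Gaussian likelihoods.
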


\begin{proof}
The time derivative of the interpolant is 
\begin{equation}
    \dot I_t^i=\dot\tau^i(t)({\bf x}_1^i-{\bf x}_0^i), 
\end{equation}
and its conditional expectation gives the velocity~\citep{albergo2025stochastic,qian2026learning}.
Solving \Cref{eq:async_interpolant} for the noise gives
\begin{equation*}
    {\bf x}_0^i=\frac{I_t^i-\tau^i(t){\bf x}_1^i}{1-\tau^i(t)},
    \qquad\text{so}\qquad
    {\bf x}_1^i-{\bf x}_0^i=\frac{{\bf x}_1^i-I_t^i}{1-\tau^i(t)}.
\end{equation*}

Substituting into $b_t({\bf x};{\bf c})=\mathbb{E}[\dot I_t\mid{\bf c},I_t={\bf x}]$ with $\dot I_t^i=\dot\tau^i(t)({\bf x}_1^i-{\bf x}_0^i)$,
\begin{equation*}
    b_t({\bf x};{\bf c})^i
    =\frac{\dot\tau^i(t)}{1-\tau^i(t)}\,\mathbb{E}\big[{\bf x}_1^i-I_t^i\mid{\bf c},I_t={\bf x}\big]
    =\frac{\dot\tau^i(t)}{1-\tau^i(t)}\Big(\mathbb{E}\big[{\bf x}_1^i\mid{\bf c},I_t={\bf x}\big]-{\bf x}^i\Big).
\end{equation*}
Since ${\bf x}_1^i={\bf E}_{{\bf y}^i}$, the conditional mean is the posterior mean embedding under \Cref{eq:denoising_classifier},
\begin{equation}
    \mathbb{E}[{\bf x}_1^i\mid{\bf c},I_t={\bf x}]=\sum_{v\in V}p({\bf y}^i=v\mid{\bf c},I_t={\bf x})\,{\bf E}_v=D_t({\bf x};{\bf c})^i{\bf E}.
\end{equation}
\end{proof}

\subsection{Asynchronous embedded flow maps}
\label{app:embedded_flow_maps}

In this subsection, we derive the embedded flow map used in \Cref{sec:training}, first characterizing the two-time denoiser \Cref{eq:two_time_denoiser} and then giving the full distillation objective \Cref{eq:psd_loss}.

\paragraph{Setup.}
Throughout, the prefix ${\bf c}$ is fixed, and the vocabulary embeddings ${\bf E}\in\mathbb{R}^{|V|\times d}$ and tokenwise schedules are held fixed.
A block has positions $i=1,...,B$ with clean embeddings ${\bf x}_1^i={\bf E}_{{\bf y}^i}$.
For each position $i$, let $\tau^i$ be continuously differentiable with $\dot\tau^i(t)>0$ for $t<1$, $\tau^i(0)=0$, and $\tau^i(1)=1$, and write $\tau_t^i\coloneqq\tau^i(t)$.
Since all tokenwise times are driven by the single global time $t$, they trace a one-dimensional monotone curve in $[0,1]^B$, and the flow map is indexed by two global times~\citep{lee2026asynchronous, bansal2026token}. Let $\hat D_t$ be the frozen teacher denoiser (\Cref{sec:training}). As in \Cref{eq:async_velocity}, it induces the velocity and flow map
\begin{equation}
    \begin{aligned}
    b_t({\bf x};{\bf c})^i
    &=
    \frac{\dot\tau^i(t)}{1-\tau_t^i}
    \big(\hat D_t({\bf x};{\bf c})^i{\bf E}-{\bf x}^i\big),\\
    \partial_t X_{s,t}({\bf x};{\bf c})
    &=b_t(X_{s,t}({\bf x};{\bf c});{\bf c}),
    \qquad X_{s,s}({\bf x};{\bf c})={\bf x}.
    \end{aligned}
\end{equation}
We work on $0\leq s\leq t<1$ and assume unique ODE solutions and that all denoisers are continuous and differentiable in time.
Expressions at the endpoint $t=1$, used in flow map inference, are understood as limits when they exist.

\paragraph{Two-time denoiser.}

For $0\leq s<t<1$, we define the average velocity $v_{s,t}$ between $s$ and $t$ as \citep{boffi2026build}
\begin{equation}\label{eq:app_average_velocity}
    v_{s,t}({\bf x};{\bf c})\coloneqq\frac{X_{s,t}({\bf x};{\bf c})-{\bf x}}{t-s}.
\end{equation}
To express transport between two times through a denoising prediction, we define a two-time denoiser~\citep{lee2026flm, lu2026one} $\delta_{s,t}({\bf x};{\bf c})\in(\Delta^{|V|-1})^B$ through its mean embedding,
\begin{equation}\label{eq:app_two_time_denoiser}
    \delta_{s,t}({\bf x};{\bf c})^i{\bf E}
    ={\bf x}^i
    +\frac{(1-\tau_s^i)(t-s)}{\tau_t^i-\tau_s^i}
      v_{s,t}({\bf x};{\bf c})^i.
\end{equation}
Substituting \Cref{eq:app_average_velocity} and rearranging gives \Cref{eq:two_time_denoiser}.
Since $|V|\gg d$, distinct probability vectors can share the same mean embedding.
Thus \Cref{eq:app_two_time_denoiser} fixes only the mean embedding $\delta_{s,t}{\bf E}$ and not $\delta$ itself.
The following proposition constructs a specific $\delta$ and identifies the conditions under which a learned denoiser recovers the flow.
\begin{proposition}
\label{prop:embedded_characterization}
There exists a two-time denoiser $\delta$ satisfying \Cref{eq:app_two_time_denoiser} with the following two conditions.
\begin{enumerate}
    \item (Diagonal condition) $\delta_{s,t}\to \hat D_s$ as $t\to s$, so that $\delta_{s,s}\coloneqq \hat D_s$.
    \item (Semigroup condition) For $s<u<t$,
\begin{equation}\label{eq:app_denoiser_semigroup}
\begin{aligned}
    \delta_{s,t}({\bf x};{\bf c})^i=\gamma^i\delta_{s,u}({\bf x};{\bf c})^i
      +(1-\gamma^i)\delta_{u,t}(X_{s,u}({\bf x};{\bf c});{\bf c})^i,
      \qquad
    \gamma^i = \frac{(1-\tau_t^i)(\tau_u^i-\tau_s^i)}
    {(1-\tau_u^i)(\tau_t^i-\tau_s^i)}.
\end{aligned}
\end{equation}
\end{enumerate}
Conversely, for any two-time denoiser $\delta_{s,t}$ satisfying the diagonal and semigroup conditions, its associated flow map defined by \Cref{eq:two_time_denoiser} equals the true flow map $X_{s,t}$.
\end{proposition}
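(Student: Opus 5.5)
The plan is to solve the teacher ODE explicitly along each coordinate by an integrating factor, and to read off $\delta$ as a normalized integral of the teacher denoiser along the true trajectory. Fix ${\bf c}$ and $s$, write $Y_t\coloneqq X_{s,t}({\bf x};{\bf c})$, and introduce the tokenwise reparametrization
\begin{equation*}
\alpha^i(t)\coloneqq\frac{\tau^i_t}{1-\tau^i_t},
\qquad
\dot\alpha^i(t)=\frac{\dot\tau^i(t)}{(1-\tau^i_t)^2}>0 .
\end{equation*}
From the velocity formula (Proposition~\ref{prop:async_velocity}, with $\hat D$ in place of $D$), a direct computation gives
\begin{equation*}
\frac{d}{dt}\Big(\frac{Y_t^i}{1-\tau_t^i}\Big)=\dot\alpha^i(t)\,\hat D_t(Y_t;{\bf c})^i{\bf E}.
\end{equation*}
Integrating from $s$ to $t$ yields
\begin{equation*}
X_{s,t}({\bf x})^i=\frac{1-\tau^i_t}{1-\tau^i_s}{\bf x}^i+(1-\tau^i_t)\int_s^t \hat D_r(X_{s,r}({\bf x}))^i{\bf E}\,d\alpha^i(r).
\end{equation*}
Using the identity $\alpha^i_t-\alpha^i_s=(\tau^i_t-\tau^i_s)/((1-\tau^i_t)(1-\tau^i_s))$ and comparing with \Cref{eq:two_time_denoiser}, I will define
\begin{equation*}
\delta_{s,t}({\bf x};{\bf c})^i\coloneqq\frac{1}{\alpha^i_t-\alpha^i_s}\int_s^t \hat D_r(X_{s,r}({\bf x};{\bf c});{\bf c})^i\,d\alpha^i(r).
\end{equation*}
This is an average of probability vectors against a positive measure, so it lies in $\Delta^{|V|-1}$. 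By construction its mean embedding satisfies \Cref{eq:app_two_time_denoiser}, which is equivalent to \Cref{eq:two_time_denoiser}.

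The diagonal condition then follows from continuity of $r\mapsto\hat D_r(X_{s,r}({\bf x}))$: an average over a shrinking interval converges to the value at $r=s$, which is $\hat D_s({\bf x})$ since $X_{s,s}={\bf x}$. For the semigroup condition, I split the integral at $u$ and invoke the flow-map composition law $X_{u,r}(X_{s,u}({\bf x}))=X_{s,r}({\bf x})$, which holds by uniqueness of ODE solutions. The second piece is then $(\alpha^i_t-\alpha^i_u)\,\delta_{u,t}(X_{s,u}({\bf x}))^i$, and the first is $(\alpha^i_u-\alpha^i_s)\,\delta_{s,u}({\bf x})^i$. It remains to check that $\gamma^i=(\alpha^i_u-\alpha^i_s)/(\alpha^i_t-\alpha^i_s)$ coincides with the stated expression, which is a routine simplification using the same identity for $\alpha$-differences.

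For the converse, let $\delta$ be any two-time denoiser satisfying both conditions, and set
\begin{equation*}
\Phi^i(t)\coloneqq(\alpha^i_t-\alpha^i_s)\,\delta_{s,t}({\bf x})^i{\bf E},\qquad \Phi^i(s)=0.
\end{equation*}
Multiplying the semigroup relation by $\alpha^i_t-\alpha^i_s$ gives
\begin{equation*}
\Phi^i(t)-\Phi^i(u)=(\alpha^i_t-\alpha^i_u)\,\delta_{u,t}(X_{s,u}({\bf x}))^i{\bf E}.
\end{equation*}
Dividing by $t-u$ and letting $t\downarrow u$, the diagonal condition yields the right derivative $\dot\Phi^i(u)=\dot\alpha^i(u)\,\hat D_u(X_{s,u}({\bf x}))^i{\bf E}$. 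Hence $\Phi^i$ equals the integral from the construction, and the map built from $\delta$ via \Cref{eq:two_time_denoiser} reproduces the explicit solution formula for $X_{s,t}$. The semigroup condition uses the true $X_{s,u}$, so no fixed-point argument is needed.

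I expect the main obstacle to be regularity in this last step. It requires the diagonal limit to hold jointly, i.e.\ $\delta_{u,t}({\bf y})\to\hat D_u({\bf y})$ as $t\downarrow u$ locally uniformly in ${\bf y}$. It also requires upgrading a one-sided derivative to $\Phi^i$ being an antiderivative, for which I would invoke continuity of $\Phi^i$ and the standing differentiability assumptions. The endpoint $t=1$, where $\alpha^i$ blows up, is handled only as a limit, as the setup permits.
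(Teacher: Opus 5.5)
Your existence half is the paper's argument. Your $\dot\alpha^i$ is its weight $w^i$, your $\alpha^i_t-\alpha^i_s$ is its normalizer $Z^i_{s,t}$, and the diagonal and semigroup checks coincide (concentration of the average; splitting at $u$ and using $X_{s,r}=X_{u,r}\circ X_{s,u}$).

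The converse is where you genuinely differ. The paper shows that $\hat X$ built from $\delta$ satisfies three properties: $\hat X_{s,s}={\rm Id}$, the tangent condition $\partial_t\hat X_{s,t}|_{t=s}=b_s$, and $\hat X_{s,t}=\hat X_{u,t}\circ\hat X_{s,u}$. For the last, it substitutes \Cref{eq:two_time_denoiser} twice to get a composite with denoiser $\gamma^i\delta_{s,u}^i+(1-\gamma^i)\delta_{u,t}(\hat X_{s,u})^i$. It then invokes the flow-map characterization of \citet{boffi2026build}. You instead convert the semigroup relation into the exact increment identity for $\Phi^i$ and integrate. This is self-contained, and it shows directly that $\delta_{s,t}{\bf E}$ is forced to equal the mean embedding of your constructed $\delta$.

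The trade-off lies in which semigroup condition is used. The paper's composition step puts $\hat X_{s,u}$ in the second slot. That is, it effectively works with the self-consistent form that the distillation target \Cref{eq:composition_target} and \Cref{prop:embedded_semigroup} rely on. You use the statement's literal $X_{s,u}$, which is exactly what makes your direct integration non-circular. Your computation also covers the self-consistent form with one extra step: with $\hat X_{s,u}$ in place of $X_{s,u}$, it shows that $\hat X^i_{s,t}/(1-\tau^i_t)$ solves the same ODE, with the same initial value, as $X^i_{s,t}/(1-\tau^i_t)$. Uniqueness then gives $\hat X=X$.

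Finally, the obstacle you anticipate mostly dissolves. In the difference quotient at $u$, the argument $X_{s,u}({\bf x})$ is held fixed as $t\downarrow u$, so the pointwise diagonal limit suffices and no local uniformity is needed. And because $\delta$ is simplex-valued, $|\Phi^i(t)-\Phi^i(u)|\leq(\alpha^i_t-\alpha^i_u)\max_v|{\bf E}_v|$. Hence $\Phi^i$ is locally Lipschitz on $[s,1)$, so it is absolutely continuous with a.e.\ derivative equal to your right derivative. This yields the integral formula without extra assumptions.
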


\begin{proof}
By the velocity in the setup,
\begin{equation}\label{eq:embedded_integrating_factor}
    \frac{{\rm d}}{{\rm d}r}\frac{X_{s,r}^i}{1-\tau_r^i}
    =\frac{(1-\tau_r^i)\,\partial_r X_{s,r}^i+\dot\tau^i(r)\,X_{s,r}^i}{(1-\tau_r^i)^2}
    =w^i(r)\,\hat D_r(X_{s,r})^i{\bf E},
    \qquad
    w^i(r)\coloneqq\frac{\dot\tau^i(r)}{(1-\tau_r^i)^2},
\end{equation}
where we suppress ${\bf x}$ and ${\bf c}$.
Define
\begin{equation}\label{eq:exact_two_time_denoiser}
    \delta_{s,t}^i\coloneqq\frac{1}{Z_{s,t}^i}\int_s^t w^i(r)\,\hat D_r(X_{s,r})^i\,{\rm d}r,
    \qquad
    Z_{s,t}^i\coloneqq\int_s^t w^i(r)\,{\rm d}r=\frac{\tau_t^i-\tau_s^i}{(1-\tau_t^i)(1-\tau_s^i)}.
\end{equation}
This is a weighted average of probability vectors, hence a probability vector.

Integrating \Cref{eq:embedded_integrating_factor} over $[s,t]$ and multiplying by $1-\tau_t^i$ gives
$X_{s,t}^i=\frac{1-\tau_t^i}{1-\tau_s^i}{\bf x}^i+\frac{\tau_t^i-\tau_s^i}{1-\tau_s^i}\delta_{s,t}^i{\bf E}$,
which is \Cref{eq:app_two_time_denoiser}.

\textit{Diagonal condition.}
As $t\to s$, the normalized weights $w^i/Z_{s,t}^i$ concentrate at $r=s$, so $\delta_{s,t}\to \hat D_s$ by continuity of the teacher along the trajectory.

\textit{Semigroup condition.}
Split the integral at $u$.
The part over $[s,u]$ is $Z_{s,u}^i\delta_{s,u}^i$.
For $r\geq u$, $X_{s,r}=X_{u,r}(X_{s,u})$, so the part over $[u,t]$ is $Z_{u,t}^i\delta_{u,t}(X_{s,u})^i$.
Dividing by $Z_{s,t}^i=Z_{s,u}^i+Z_{u,t}^i$ gives \Cref{eq:app_denoiser_semigroup} with $\gamma^i=Z_{s,u}^i/Z_{s,t}^i$.

\textit{Converse.}
Let $\hat X$ be defined from $\delta$ by \Cref{eq:two_time_denoiser}.
Then $\hat X$ is the flow map of $b_t$ if $\hat X_{s,s}={\rm Id}$ and it satisfies the tangent condition $\partial_t\hat X_{s,t}|_{t=s}=b_s$ and the semigroup condition $\hat X_{s,t}=\hat X_{u,t}\circ\hat X_{s,u}$~\citep{boffi2026build}.
We verify each.
$\hat X_{s,s}={\rm Id}$ holds by construction.
The tangent condition follows from the diagonal condition by differentiating \Cref{eq:two_time_denoiser} at $t=s$.
Substituting \Cref{eq:two_time_denoiser} twice, $\hat X_{u,t}\circ\hat X_{s,u}$ has the form of \Cref{eq:two_time_denoiser} with denoiser $\gamma^i\delta_{s,u}^i+(1-\gamma^i)\delta_{u,t}(\hat X_{s,u})^i$, so the semigroup condition on $\delta$ gives the semigroup condition on $\hat X$.
Hence $\hat X=X$.
\end{proof}

\paragraph{Distillation objective.}

By \Cref{prop:embedded_characterization}, learning the flow map reduces to satisfying the diagonal and semigroup conditions.
We impose both with KL losses on a student $\hat\delta_{s,t}$ initialized from the frozen teacher $\hat D_s$ (\Cref{sec:training}).

For $0\leq s<u<t<1$, the off-diagonal target composes two shorter student transitions \citep{lee2026flm},
\begin{equation}\label{eq:composition_target}
    \bar\delta_{s,t}({\bf x}_s;{\bf c})^i
    \coloneqq\mathsf{sg}\Big[
    \gamma^i\,\hat\delta_{s,u}({\bf x}_s;{\bf c})^i
    +(1-\gamma^i)\,\hat\delta_{u,t}({\bf x}_u;{\bf c})^i
    \Big],
    \qquad
    {\bf x}_u\coloneqq\hat X_{s,u}({\bf x}_s;{\bf c}),
\end{equation}
with $\gamma^i$ as in \Cref{eq:app_denoiser_semigroup}.
Since $0\leq\gamma^i\leq1$, the target is a categorical distribution.
The intermediate state ${\bf x}_u$ is computed by the student, and $\mathsf{sg}$ stops gradients through the entire target.
Since prefix positions are conditioned on time $1$, the student retains the prefix cache and an autoregressive predictor $f^{\hat{\delta}}$.
We train the student with semigroup-based flow map distillation~\citep{lee2026flm} plus autoregressive distillation:
\begin{equation}\label{eq:psd_loss}
\mathcal{L}_{\mathsf{distill}}(\hat{\delta})
\coloneqq
\underbrace{\mathbb{E}_{{\bf c}}\,\mathsf{KL}(f({\bf c})\|f^{\hat{\delta}}({\bf c}))}_{\text{AR distillation loss}}
+
\underbrace{\mathbb{E}_{t,{\bf c},I_t}\,\mathsf{KL}(\hat{D}_t(I_t;{\bf c})\|\hat{\delta}_{t,t}(I_t;{\bf c}))}_{\text{diagonal distillation loss}}
+
\underbrace{\mathbb{E}_{s,u,t,{\bf c},I_s}\,\mathsf{KL}(\bar\delta_{s,t}(I_s;{\bf c})\|\hat{\delta}_{s,t}(I_s;{\bf c}))}_{\text{off-diagonal distillation loss}},
\end{equation}
where each $\mathsf{KL}$ sums tokenwise over prefix or block positions.
The first two loss terms match the student's AR predictions and instantaneous denoising with the teacher, while the off-diagonal term enforces the semigroup property $\hat{X}_{s,t}\approx\hat{X}_{u,t}\circ\hat{X}_{s,u}$, bootstrapping long jumps from short ones up to one-step generation $\hat{X}_{0,1}$.

The following proposition shows that the two flow map terms of this objective are well posed, in that their population minimizers recover the teacher flow map.

\begin{proposition}
\label{prop:embedded_semigroup}
Assume full support of the sampling distributions over $(s,u,t)$ and the states.
Over all two-time denoisers $\delta$, the population objective
\begin{equation}\label{eq:embedded_semigroup_loss}
    \mathcal L(\delta)
    =\mathbb E_{t,I_t}\sum_{i=1}^B
    \mathsf{KL}\big(\hat{D}_t(I_t;{\bf c})^i\,\|\,\delta_{t,t}(I_t;{\bf c})^i\big)
    +\mathbb E_{s,u,t,I_s}\sum_{i=1}^B
    \mathsf{KL}\big(\bar\delta_{s,t}(I_s;{\bf c})^i\,\|\,\delta_{s,t}(I_s;{\bf c})^i\big)
\end{equation}
attains its minimum zero, and every population minimizer recovers the teacher flow map $X$ through \Cref{eq:two_time_denoiser}.
\end{proposition}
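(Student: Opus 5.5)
We argue in two parts: first that the minimum value zero is attained, then that any zero-loss denoiser satisfies the hypotheses of the converse in \Cref{prop:embedded_characterization}.

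\textbf{Attainment.} Both terms are sums of KL divergences, so $\mathcal L(\delta)\ge 0$. We exhibit a zero-loss $\delta$ by taking the exact two-time denoiser $\delta^\star$ built in \Cref{eq:exact_two_time_denoiser} from the proof of \Cref{prop:embedded_characterization}. By that proposition, $\delta^\star_{t,t}=\hat D_t$, so the diagonal KL vanishes pointwise. The semigroup identity \Cref{eq:app_denoiser_semigroup} holds for $\delta^\star$ with intermediate state $X_{s,u}({\bf x})$. Because the flow map induced by $\delta^\star$ through \Cref{eq:two_time_denoiser} is the true flow map, the student intermediate ${\bf x}_u=\hat X_{s,u}({\bf x}_s)$ equals $X_{s,u}({\bf x}_s)$. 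Hence the composed target in \Cref{eq:composition_target} satisfies $\bar\delta_{s,t}=\delta^\star_{s,t}$, and the off-diagonal KL also vanishes. The stop-gradient is irrelevant here, since we evaluate a population functional in which the target is computed from the same $\delta$. So $\mathcal L(\delta^\star)=0$ and the minimum value is zero.

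\textbf{Characterizing minimizers.} Let $\delta$ be any minimizer, so $\mathcal L(\delta)=0$. Each KL is nonnegative and the expectations have full support. Assuming continuity of $\delta$ in its arguments, as in the setup, we get pointwise equalities:
\begin{itemize}
\item $\delta_{t,t}({\bf x})^i=\hat D_t({\bf x})^i$ for all $t,{\bf x}$;
\item $\delta_{s,t}({\bf x})^i=\gamma^i\delta_{s,u}({\bf x})^i+(1-\gamma^i)\delta_{u,t}(\hat X_{s,u}({\bf x}))^i$ for all $s<u<t$ and all ${\bf x}$, where $\hat X$ is the map induced by $\delta$ via \Cref{eq:two_time_denoiser}.
\end{itemize}
Pointwise equality of distributions from vanishing KL needs only $\mathsf{KL}(p\|q)=0\iff p=q$. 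Passing from almost-everywhere to everywhere equality uses full support together with continuity.

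The second equality is exactly the semigroup condition \Cref{eq:app_denoiser_semigroup}, written with the model's own map $\hat X_{s,u}$. The converse computation in \Cref{prop:embedded_characterization} is phrased in terms of $\hat X$. Substituting \Cref{eq:two_time_denoiser} twice shows that $\hat X_{u,t}\circ\hat X_{s,u}$ has the form of \Cref{eq:two_time_denoiser} with denoiser $\gamma^i\delta_{s,u}^i+(1-\gamma^i)\delta_{u,t}(\hat X_{s,u})^i$. This gives $\hat X_{s,t}=\hat X_{u,t}\circ\hat X_{s,u}$.

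\textbf{Diagonal condition and conclusion.} The diagonal condition is stated as the limit $\delta_{s,t}\to\hat D_s$ as $t\to s$. The loss gives only the value $\delta_{t,t}=\hat D_t$, so we use continuity of $\delta$ in $(s,t)$ to turn it into the limit. Then differentiating \Cref{eq:two_time_denoiser} at $t=s$ gives the tangent condition $\partial_t\hat X_{s,t}|_{t=s}=b_s$. Together with $\hat X_{s,s}={\rm Id}$ and the semigroup property, the uniqueness argument cited from \citep{boffi2026build} yields $\hat X=X$. The endpoint $t=1$ is handled by limits, as in the setup.

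The main obstacle is this last step, which rests on regularity. The loss pins down $\delta$ only almost everywhere, and only on the diagonal values. If regularity is not assumed, we would instead restrict to a continuous hypothesis class, or argue directly from the semigroup identity: split $[s,t]$ into $n$ pieces, so that $\hat X_{s,t}$ is a composition of $n$ short jumps. Each short jump matches an Euler step of $b$ up to $o(1/n)$ by continuity near the diagonal, so the composition converges to $X_{s,t}$.
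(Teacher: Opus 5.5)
Your proposal is correct and follows the paper's proof. Like the paper, you exhibit the exact two-time denoiser from \Cref{prop:embedded_characterization} as a zero-loss point, then use nonnegativity of KL, full support and continuity to get the diagonal and semigroup conditions everywhere, and finish with the converse of that proposition. You also fill in details the paper leaves implicit: the composed target uses the model's own intermediate $\hat X_{s,u}$, which coincides with $X_{s,u}$ for the exact denoiser, and continuity is what turns the value $\delta_{t,t}=\hat D_t$ into the required diagonal limit.
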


\begin{proof}
The denoiser constructed in \Cref{prop:embedded_characterization} satisfies both conditions, so both KL terms are zero.
Conversely, a zero loss implies the diagonal and semigroup conditions almost surely, and by full support and continuity, everywhere.
By the converse of \Cref{prop:embedded_characterization}, $\hat X=X$.
\end{proof}

Note that the two terms of \Cref{eq:embedded_semigroup_loss} are the diagonal and off-diagonal terms of \Cref{eq:psd_loss}. The autoregressive term depends only on the prefix readout and is zero at $f = f^{\hat\delta}$, so it does not change the population minimizers of the two flow map terms.

\section{Algorithmic details}
\label{app:implementation_details}

We provide training and inference pseudocode in \Cref{alg:block-flow-training,alg:block-time-sampler,alg:block-euler-sampling,alg:fused-denoiser-inference}, and provide the hyperparameters used for our main results in \Cref{tab:training-hyperparameters}.

\subsection{Training}
\label{app:training}
\label{app:time_reparam}

\begin{figure}[t!]
\centering
\begin{minipage}[t]{0.49\linewidth}
\begin{algorithm}[H]
\footnotesize
\DontPrintSemicolon
\SetNoFillComment
\SetKwComment{tcp}{$\triangleright$\ }{}
\caption{A step of parallel denoiser training}
\label{alg:block-flow-training}
\KwIn{Training sequence ${\bf y}\in V^{R\times B}$, block size $B$, transformer $f$ with embeddings ${\bf E}$, tokenwise time $\tau$, noise scale $\sigma$, self-conditioning probability $p_{\mathsf sc}$\;}
\BlankLine
$({\bf y}^{(1)}, ..., {\bf y}^{(R)})\gets {\bf y}$\tcp*[r]{$R$ blocks of $B$ clean tokens}
\For{each block $r$}{
  $t_r\sim\mathsf{U}[0,1]$\tcp*[r]{uniform base time}
  ${\bf x}_1^{(r)}\gets{\bf E}_{{\bf y}^{(r)}}$\tcp*[r]{data embedding}
  ${\bf x}_0^{(r)}\sim \mathsf{N}(0, \sigma^2I)$\tcp*[r]{noise}
  $I_{t_r}^{(r)}\gets (1-\tau(t_r)){\bf x}_0^{(r)} + \tau(t_r){\bf x}_1^{(r)} $\tcp*[r]{interpolant \Cref{eq:async_interpolant}}
}
$I \gets (I_{t_1}^{(1)}, ..., I_{t_R}^{(R)})$\tcp*[r]{$R$ blocks of $B$ noisy tokens}
$\boldsymbol{\tau} \gets (\tau(t_1), ..., \tau(t_R))$\tcp*[r]{$R$ blocks of $B$ tokenwise times}
\If{$\mathsf{U}[0,1] < p_{\mathsf sc}$}{
  Self-condition the noisy blocks following \Cref{eq:self_conditioning_training}\;
}
$({\bf q},{\bf p})\gets f_{({\bf 1}, \boldsymbol{\tau})}({\bf y},I)$\tcp*[r]{AR and denoising (\Cref{fig:training})}
${\mathcal L}(f)\gets\mathsf{CE}({\bf q},{\bf y})+\mathsf{CE}({\bf p},{\bf y})+|{\bf p}{\bf E}-{\bf E}_{{\bf y}}|^2$\tcp*[r]{loss \Cref{eq:loss}}
Update $f$ including ${\bf E}$ using $\mathcal L(f)$\;
Project each ${\bf E}_v$ to its pretrained norm\;
\KwRet{$f$}\;
\end{algorithm}
\end{minipage}\hfill
\begin{minipage}[t]{0.49\linewidth}
\begin{algorithm}[H]
\LinesNotNumbered
\footnotesize
\DontPrintSemicolon
\SetNoFillComment
\SetKwComment{tcp}{$\triangleright$\ }{}
\caption{A step of updating tokenwise time}
\label{alg:block-time-sampler}
\KwIn{Denoiser $D$ with embeddings ${\bf E}$, loss matrix ${\bf L}\in\mathbb R^{B\times M}$, block size $B$, time bins $M$, a batch of training sequences ${\bf y}$, noise scale $\sigma$, EMA decay $\beta$}
\BlankLine
$\bar{\bf L}\gets {\bf 0}\in\mathbb{R}^{B\times M}$\tcp*[r]{position-time bins}
\For{each sequence ${\bf y}$}{
  Sample a block ${\bf y}^{(r)}\in V^B$ from ${\bf y}$\;
  $t_r\sim{\mathsf{U}}[0,1]$\tcp*[r]{uniform base time}
  ${\bf x}_1^{(r)}\gets{\bf E}_{{\bf y}^{(r)}}$\tcp*[r]{embedding}
  ${\bf x}_0^{(r)}\sim \mathsf{N}(0, \sigma^2I)$\tcp*[r]{noise}
  $I_{t_r}^{(r)}\gets (1-t_r){\bf x}_0^{(r)} + t_r{\bf x}_1^{(r)} $\tcp*[r]{synced interpolant}
  ${\bf p}^{(r)}\gets{D_{t_r}}(I_{t_r}^{(r)};{\bf y}^{(1:r-1)})$\tcp*[r]{block denoising}
  $\bar{{\bf L}}\gets {\mathsf{record}}\!\left(\bar{{\bf L}}, \mathsf{CE}({\bf p}^{(r)},{\bf y}^{(r)}),t_r\right)$;\tcp*[r]{running mean}
}
${\bf L}_{:,m}\!\gets\!\beta{\bf L}_{:,m}\!+\!(1\!-\!\beta)\bar{{\bf L}}_{:,m}$ for each visited $m$\tcp*[r]{EMA \Cref{eq:time_sampler_ema}}
\For{each position $i=1, ..., B$}{
$F_i\gets{\mathsf{CDF}}(\frac{{\bf L}_{i,:}}{\sum_m{\bf L}_{i,m}})$\tcp*[r]{piecewise linear CDF}
$\tau^i\gets F_i^{-1}$\tcp*[r]{tokenwise time \Cref{eq:time_reparam}}
}
\KwRet{${\bf L},\tau$}\;
\end{algorithm}
\end{minipage}
\end{figure}

To train on many blocks at once, following prior work on block diffusion~\citep{arriola2025bd3lm,tian2025next,jin2026mars,liu2025wedlm}, we pair each clean training sequence with a copy which is split into noisy blocks of size $B$.
The attention mask applies prefix-causal and block-bidirectional structure to every block at once, such that clean positions attend causally, and each noisy block attends to all clean tokens preceding it and bidirectionally within itself, but not to later clean tokens or other noisy blocks.
Under this setting, each step of training progresses as in \Cref{alg:block-flow-training}.

During training, inspired by the loss-aware time warping of \citet{dieleman2022cdcd}, applied separately to each block position, we learn each tokenwise time as the inverse cumulative distribution of its denoising loss.
Let $\ell_i(\tau)>0$ be the expected denoising cross-entropy loss of position $i$ at tokenwise time $\tau$, and define
\begin{equation}\label{eq:time_reparam}
    \tau^i(t)\coloneqq F_i^{-1}(t),
    \qquad
    F_i(\tau)\coloneqq\frac{\int_0^\tau\ell_i(r)\,{\rm d}r}{\int_0^1\ell_i(r)\,{\rm d}r}.
\end{equation}
Since $\dot\tau^i\propto1/\ell_i(\tau^i)$ is small where the loss is high, both a uniform temporal grid over $t$ during inference and uniform sampling of $t$ during training concentrate on the hardest noise levels.

To estimate $F$ during training, we keep an exponential moving average ${\bf L}\in\mathbb{R}^{B\times M}$ of the per-position denoising cross entropy in $M$ uniform time bins, and set each $F_i$ to the piecewise-linear CDF of the normalized row ${\bf L}_{i,:}$.
We initialize ${\bf L}$ uniformly, and every $S\in\mathbb{N}$ optimization steps, run four gradient-free forward passes with independently sampled uniform times for each noisy block (see \Cref{alg:block-time-sampler} for details).
For every position-time bin $(i,m)$ encountered during these forward passes, we update
\begin{equation}\label{eq:time_sampler_ema}
    {\bf L}_{i,m}\gets\beta{\bf L}_{i,m}+(1-\beta)\bar{\bf L}_{i,m},
\end{equation}
where $\bar{\bf L}_{i,m}$ is the mean cross entropy recorded at position $i$ in time bin $m$, and bins that are not visited retain their previous value.
This independent uniform estimation prevents the learned sampling distribution from starving low-probability regions.

\begin{figure}[t!]
\centering
\begin{minipage}[t]{0.49\linewidth}

\begin{algorithm}[H]
\LinesNotNumbered
\footnotesize
\DontPrintSemicolon
\SetNoFillComment
\SetKwComment{tcp}{$\triangleright$\ }{}
\caption{Flow decoding}
\label{alg:block-euler-sampling}
\KwIn{Denoiser $D$ with embeddings ${\bf E}$, prefix ${\bf c}$, tokenwise time $\tau$, block size $B$, Euler steps $K$, guidance scale $g\geq1$, noise scale $\sigma$}
\BlankLine
$\hat{\bf x}_0\sim\mathsf N(0,\sigma^2I)$\tcp*[r]{initial noise}
$t_k\gets k/K$ for $k=0,\ldots,K$\tcp*[r]{base time grid}
${\bf s},{\bf s}'\gets\varnothing, \varnothing$\tcp*[r]{self-conditioning}
\For{each step $k=0, ..., K-1$}{
  ${\bf s}\gets D_{t_k}(\mathsf{SC}(\hat{\bf x}_k,{\bf s});{\bf c})$\tcp*[r]{usual forward}
  \eIf{$g>1$}{
    ${\bf s}'\gets D_{t_k}(\mathsf{SC}(\hat{\bf x}_k,{\bf s}'); \varnothing)$\tcp*[r]{prefix-free forward}
    $\tilde{\bf x}_k\gets{\bf s}'{\bf E}+g({\bf s}{\bf E}-{\bf s}'{\bf E})$\tcp*[r]{guided}
  }{
    $\tilde{\bf x}_k\gets{\bf s}{\bf E}$\tcp*[r]{unguided}
  }
  \For{each position $i=1, ..., B$}{
    $\hat{\bf x}_{k+1}^i\gets\hat{\bf x}_k^i+\frac{\tau^i(t_{k+1})-\tau^i(t_k)}{1-\tau^i(t_k)}(\tilde{\bf x}_k^i - \hat{\bf x}_k^i)$\tcp*[r]{step}
  }
}
$\hat{\mathbf y}\gets (\mathsf{argmax}({\bf s}^j) )_{j=1}^{B}$\tcp*[r]{decoding}
\KwRet{$\hat{\mathbf y}$}\;
\end{algorithm}
\end{minipage}\hfill
\begin{minipage}[t]{0.49\linewidth}
\begin{algorithm}[H]
\LinesNotNumbered
\footnotesize
\DontPrintSemicolon
\SetNoFillComment
\SetKwComment{tcp}{$\triangleright$\ }{}
\caption{Fused self-speculative decoding}
\label{alg:fused-denoiser-inference}
\KwIn{Denoiser $D$ and AR predictor $f$ sharing a transformer, prefix ${\bf c}$ with KV cache ${\bf h}$, block size $B$, scale $\sigma$}
\BlankLine
${\bf p}\gets D_0({\bf z};{\bf c})$, ${\bf z}\sim\mathsf N(0,\sigma^2I)$, ${\bf q}\gets f({\bf c})$\tcp*[r]{draft \& AR prediction}
\While{the stopping condition is not met}{
  $\hat{\bf y}\gets(\mathsf{argmax}({\bf p}^i))_{i=1}^B$\tcp*[r]{rounding}
  \eIf{$\hat{\bf y}^1\neq\mathsf{argmax}({\bf q}^{|c|})$}{
    $\hat{\bf y}^1\gets\mathsf{argmax}({\bf q}^{|c|})$, $A\gets\{1\}$\tcp*[r]{correction}
  }{
    $A\gets[B]$\tcp*[r]{all possible acceptance lengths}
  }
  ${\bf z}^{(j)}\sim \mathsf{N}(0, \sigma^2I)$ for $j\in A$\tcp*[r]{noise blocks}
  Run a single forward pass on prefix ${\bf c}\oplus \hat{\bf y}^{1:\max A}$ and blocks $({\bf z}^{(j)})_{j\in A}$ with KV cache ${\bf h}$\tcp*[r]{(\Cref{fig:fused_decoding})}
  $({\bf q}; {\bf h})\gets f({\bf c}\oplus \hat{\bf y}^{1:\max A})$\tcp*[r]{get AR prediction and KV}
  ${\bf p}^{(j) }\gets D_0({\bf z}^{(j)}; {\bf c}\oplus \hat{\bf y}^{1:j})$ for $j\in A$\tcp*[r]{get drafts}
  $a\gets\max\{j\in A:\hat{\bf y}^{i+1}=\mathsf{argmax}({\bf q}^{|c|+i})\ \forall\,1\leq i<j\}$\;
  $({\bf c},\!{\bf h},\!{\bf q},\!{\bf p}) \gets ({\bf c}\!\oplus\!\hat{\bf y}^{1:a}, {\bf h}^{:|c|+a}, {\bf q}^{:|c|+a}, {\bf p}^{(a)})$\tcp*[r]{update}
}
\KwRet{${\bf c}$}\;
\end{algorithm}
\end{minipage}
\end{figure}

\subsection{Flow decoding}
\label{app:self_conditioning}
\label{app:prefix_guidance}

To improve flow decoding, we apply self-conditioning during training with a fixed probability by obtaining an initial prediction without gradient and using its expected token embeddings to condition a second forward pass.
Following \citet{chen2026langflow}, we incorporate this prediction through additional projection layers:
\begin{equation}\label{eq:self_conditioning_training}
    \bar{\bf x}_1= \mathsf{sg}\big(D_t(I_t;{\bf c}){\bf E}\big), \qquad \tilde I_t= I_t + I_t{\bf W}_1^\top+ \bar{\bf x}_1{\bf W}_2^\top,
\end{equation}
where ${\bf W}_1,{\bf W}_2\in\mathbb{R}^{d\times d}$ are trainable and initialized to zero.
For training under self-conditioning, we replace the noisy block input with $\tilde I_t$ and compute the training loss on the second pass.
Otherwise, we train with a single pass on $I_t$.
In practice, we use a probability $p_\mathsf{sc}=0.5$ of using self-conditioning for each training batch.

During inference, we use the denoiser prediction from the previous integration step for self-conditioning.
In single-step update of Euler integration of \Cref{eq:block_flow_ode}, we replace $D_{t_k}(\hat{\bf x}_{t_k};{\bf c})$ with ${\bf q}_k$, defined by
\begin{equation}
    \tilde{\bf x}_{t_k} = \hat{\bf x}_{t_k}+ \hat{\bf x}_{t_k}{\bf W}_1^\top + ({\bf q}_{k-1}{\bf E}){\bf W}_2^\top, \qquad {\bf q}_k = D_{t_k}(\tilde{\bf x}_{t_k};{\bf c}),\qquad {\bf q}_0=D_0(\hat{\bf x}_0;{\bf c}).
\end{equation}
Detailed procedure for flow decoding can be found in \Cref{alg:block-euler-sampling}, where we write, for compactness:
\begin{equation}
\mathsf{SC}({\bf x},{\bf q}) \coloneqq {\bf x}+{\bf x}{\bf W}_1^\top+({\bf q}{\bf E}){\bf W}_2^\top,
\qquad
\mathsf{SC}({\bf x},\varnothing) \coloneqq {\bf x}.
\end{equation}
In addition to self-conditioning, during flow decoding, we employ prefix guidance to strengthen the conditioning on the prefix, analogously to history guidance in video diffusion models~\citep{song2025history}.
We define the guided denoiser as
\begin{equation}\label{eq:prefix_guidance}
    D_t^g({\bf x};{\bf c})
    \coloneqq D_t({\bf x};\varnothing)
    +g(D_t({\bf x};{\bf c})
    -D_t({\bf x};\varnothing)),
    \qquad g\geq1,
\end{equation}
and replace $D_{t_k}(\hat{\bf x}_{t_k};{\bf c}){\bf E}$ in (self-conditioned) Euler integration \Cref{eq:block_flow_ode} with $D_{t_k}^g(\hat{\bf x}_{t_k};{\bf c}){\bf E}$.
Guidance scale $g=1$ recovers the usual prediction, whereas $g>1$ changes the velocity and hence the sampled distribution.

\subsection{Self-speculative decoding}
\label{app:denoiser_inference}

\begin{figure}[t!]
\centering
\includegraphics[width=\linewidth]{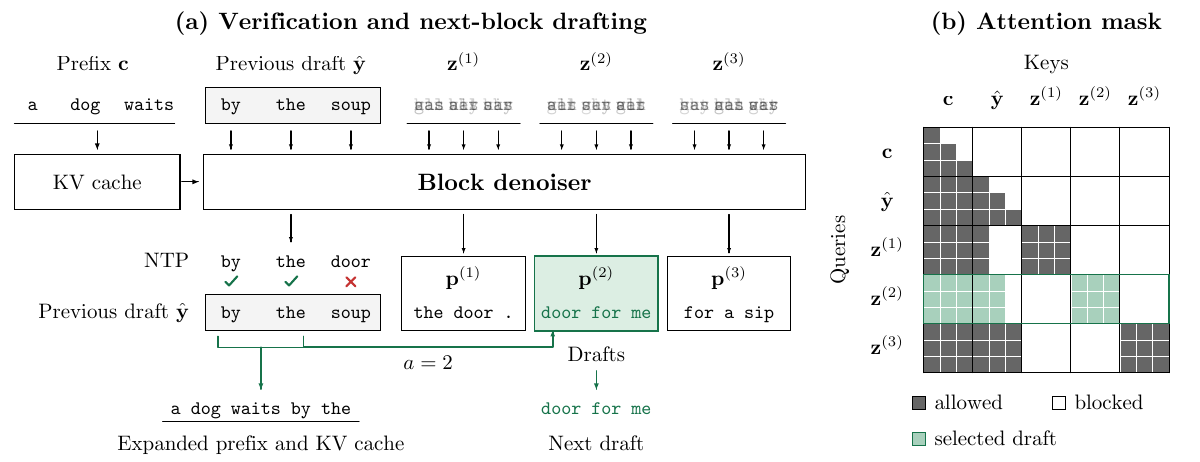}
\caption{Fused self-speculative decoding with block size three.
A single forward pass verifies previous draft with AR prediction and performs next-block drafting.
This is done by decoding next drafts corresponding to every possible acceptance of the previous draft in parallel and selecting one based on verification.}
\label{fig:fused_decoding}
\end{figure}

\begin{table}[t!]
\centering
\footnotesize
\caption{
Main hyperparameters.
For block size four, we use batches of 384 sequences for training, whereas for larger blocks, we use memory-matched microbatches and, when applicable, continue training from smaller-block checkpoint.
}
\label{tab:training-hyperparameters}
\begin{tabular}{lccc}
\toprule
{Setting} & {\ours{} 0.5B} & {\ours{} 7B} & {\ours{} 8B} \\
\midrule
\rowcolor{gray!10}\multicolumn{4}{c}{\textit{Model and data}} \\
Initialization & Qwen2.5 0.5B Instruct & Qwen2.5 7B Instruct & Qwen3 8B Base \\
Training data / epochs / tokens & \multicolumn{3}{c}{Dolci-Instruct-SFT / 5 epochs / approximately 9B} \\
Block sizes $B$ & 4, 8, 16 & 4, 8, 16 & 4 \\
Maximum sequence length & 512--1024 & 512--1024 & 512 \\
adaLN & Per-layer & Shared across layers & Shared across layers \\
\midrule
\rowcolor{gray!10}\multicolumn{4}{c}{\textit{Optimization}} \\
Optimizer & \multicolumn{3}{c}{AdamW} \\
Learning rate & $1{\times}10^{-4}$ & $1{\times}10^{-5}$ & $1{\times}10^{-5}$ \\
Schedule / warm-up & \multicolumn{3}{c}{Constant / first 1\% of steps} \\
Adam $(\beta_1,\beta_2)$ & \multicolumn{3}{c}{(0.9, 0.999)} \\
Weight decay / gradient clipping / batch size & \multicolumn{3}{c}{0 / 1.0 / 384 sequences} \\
Precision & \multicolumn{3}{c}{\texttt{bfloat16}} \\
Distributed training & DDP & ZeRO-3 & ZeRO-3 \\
\midrule
\rowcolor{gray!10}\multicolumn{4}{c}{\textit{Training objective and architecture}} \\
Training noise scale $\sigma$ & \multicolumn{3}{c}{4.0} \\
Self-conditioning probability $p_\mathsf{sc}$ & \multicolumn{3}{c}{0.5} \\
adaLN learning rate multiplier & \multicolumn{3}{c}{5.0} \\
Parameter EMA decay & \multicolumn{3}{c}{0.999 (enabled from step 500)} \\
\midrule
\rowcolor{gray!10}\multicolumn{4}{c}{\textit{Learned tokenwise time}} \\
Histogram bins $M$ & \multicolumn{3}{c}{200} \\
Histogram EMA decay $\beta$ & \multicolumn{3}{c}{0.99} \\
Update interval $S$ & \multicolumn{3}{c}{100 steps} \\
\midrule
\rowcolor{gray!10}\multicolumn{4}{c}{\textit{Flow decoding}} \\
Inference noise scale $\sigma$ & \multicolumn{3}{c}{0.1} \\
prefix guidance scale $g$ & \multicolumn{3}{c}{8.0} \\
\bottomrule
\end{tabular}
\end{table}

Self-speculative decoding drafts a block using the denoiser $D$ and commits its longest accepted prefix using AR verification with $f$.
We find it possible to build a fused operation that uses a single forward pass to simultaneously perform verification of a previously drafted block and drafting of the next block.
%

In each step of the fused decoding, we are given a prefix ${\bf c}\in V^N$, the AR prediction on the prefix ${\bf q} \coloneqq f({\bf c})\in(\Delta^{|V|-1})^N$, and a previously drafted block ${\bf p}\in (\Delta^{|V|-1})^B$ with $\hat{\bf y} = \mathsf{argmax}({\bf p}^i)_{i=1}^B \in V^B$.
We would like to simultaneously accept a prefix $\hat{\bf y}^{1:a}$ using AR verification with $f$, and also draft the next block via $D_0({\bf z};{\bf c} \oplus \hat{\bf y}^{1:a})$ where ${\bf z}$ denotes noise and $\oplus$ denotes concatenation.

Since we have the AR prediction ${\bf q}$ on the prefix, the verification for the first token $\hat{\bf y}^1$ can be done for free by comparing against ${\bf q}^N$.
If $\hat{\bf y}^1 \neq \mathsf{argmax}({\bf q}^N)$, then we simply replace $\hat{\bf y}^1$ by $\mathsf{argmax}({\bf q}^N)$ and accept $a=1$.
Drafting then evaluates $D_0({\bf z}; {\bf c}\oplus \hat{\bf y}^1)$, which simultaneously produces the AR prediction $f({\bf c}\oplus \hat{\bf y}^1)$ on the extended prefix.

When $\hat{\bf y}^1 = \mathsf{argmax}({\bf q}^N)$, we need block verification.
To perform this verification and next-block drafting in a single forward pass, we construct the input to the transformer underlying $D$ and $f$ as a concatenation of a clean prefix ${\bf c} \oplus \hat{\bf y}$ and $B$ blocks of $B$ noise tokens, ${\bf z}^{(1)}, ..., {\bf z}^{(B)}\in\mathbb{R}^{B\times d}$, each drawn from $\mathsf{N}(0, \sigma^2I)$.
We then construct the attention mask such that the prefix ${\bf c} \oplus \hat{\bf y}$ attends causally to itself, while each block ${\bf z}^{(j)}$ attends bidirectionally to itself and causally to a prefix of the prefix, ${\bf c} \oplus \hat{\bf y}^{1:j}$ (\Cref{fig:fused_decoding}).
Accordingly, we build tokenwise time such that ${\bf c} \oplus \hat{\bf y}$ is conditioned on one and all ${\bf z}^{(j)}$ on zero.
Since this produces the AR prediction $f({\bf c} \oplus \hat{\bf y})$ on the prefix, we can determine acceptance $\hat{\bf y}^{1:a}$ as follows (\Cref{sec:inference_schemes}):
\begin{equation}
    a = \max\{j\in [B]: \hat{\bf y}^{i+1} = \mathsf{argmax}(f({\bf c} \oplus \hat{\bf y})^{N+i})\text{ for every } 1\leq i< j\}.
\end{equation}
Then, instead of an additional forward pass, we simply take output tokens corresponding to ${\bf z}^{(a)}$ since their production is computationally identical to $D_0({\bf z}^{(a)}; {\bf c}\oplus \hat{\bf y}^{1:a})$.
The next step of fused decoding then operates on extended prefix ${\bf c}\oplus\hat{\bf y}^{1:a}$, drafted block $D_0({\bf z}^{(a)}; {\bf c}\oplus \hat{\bf y}^{1:a})$, and prefix AR prediction $f({\bf c} \oplus \hat{\bf y}^{1:a}) = f({\bf c} \oplus \hat{\bf y})^{1:N+a}$.
In practice, we further reuse the key-value cache of the prefix to improve efficiency (\Cref{alg:fused-denoiser-inference}).

\section{Evaluation details}
\label{app:eval_details}

\begin{table}[t!]
\centering
\footnotesize
\caption{Flow and flow map decoding accuracies.}
\label{tab:flow_full}
\resizebox{\linewidth}{!}{%
\begin{tabular}{lcccccccccc}
\toprule
\rowcolor{gray!10}
\multicolumn{3}{c}{} & \multicolumn{4}{c}{\emph{0.5B scale}} & \multicolumn{4}{c}{\emph{7B scale}} \\
\cmidrule(lr){4-7}\cmidrule(lr){8-11}
{Method} & $B$ & {Euler steps}
  & {GSM8K} & {HumanEval} & {IFEval} & {Average}
  & {GSM8K} & {HumanEval} & {IFEval} & {Average} \\
\midrule
AR SFT & -- & -- & \textbf{32.0} & \textbf{35.4} & 48.4 & \textbf{38.6} & 69.1 & \textbf{78.7} & \underline{73.8} & 73.9 \\
\midrule
\rowcolor{blue!5} \multicolumn{11}{c}{\emph{Flow decoding}} \\
\multirow{24}{*}{\textbf{\ours{} (Ours)}} & \multirow{8}{*}{16} & 1 & 0.8 & 3.0 & 20.3 & 8.1 & 2.4 & 10.4 & 29.8 & 14.2 \\
 &  & 2 & 1.7 & 6.1 & 23.5 & 10.4 & 18.7 & 21.3 & 36.2 & 25.4 \\
 &  & 4 & 11.7 & 12.2 & 27.0 & 17.0 & 47.5 & 40.2 & 44.7 & 44.1 \\
 &  & 8 & 18.3 & 14.0 & 30.9 & 21.1 & 60.2 & 54.9 & 56.9 & 57.3 \\
 &  & 16 & 24.8 & 17.1 & 37.2 & 26.3 & 65.6 & 65.2 & 64.9 & 65.2 \\
 &  & 32 & 25.9 & 21.3 & 39.9 & 29.1 & 66.9 & 68.3 & 64.1 & 66.5 \\
 &  & 64 & 28.1 & 21.3 & 42.3 & 30.6 & 67.5 & 66.5 & 66.5 & 66.8 \\
 &  & 128 & 28.1 & 20.7 & 44.2 & 31.0 & 68.7 & 67.7 & 65.2 & 67.2 \\
\cmidrule{2-11}
 & \multirow{8}{*}{8} & 1 & 2.4 & 6.1 & 23.3 & 10.6 & 20.1 & 22.0 & 34.8 & 25.6 \\
 &  & 2 & 9.2 & 12.2 & 23.5 & 15.0 & 47.4 & 41.5 & 42.1 & 43.7 \\
 &  & 4 & 17.4 & 18.9 & 29.0 & 21.8 & 64.0 & 57.9 & 55.8 & 59.2 \\
 &  & 8 & 24.8 & 25.6 & 38.4 & 29.6 & 64.8 & 64.0 & 64.5 & 64.5 \\
 &  & 16 & 28.4 & 28.0 & 46.4 & 34.3 & 71.6 & 70.7 & 64.5 & 68.9 \\
 &  & 32 & 29.1 & 28.0 & 45.5 & 34.2 & 72.3 & 72.0 & 71.0 & 71.7 \\
 &  & 64 & 28.4 & 28.0 & 47.1 & 34.5 & 71.7 & 72.6 & 68.8 & 71.0 \\
 &  & 128 & 28.7 & 28.7 & 49.0 & 35.5 & 71.0 & 72.6 & 69.3 & 70.9 \\
\cmidrule{2-11}
 & & 1 & 16.4 & 13.4 & 32.5 & 20.8 & 53.9 & 32.3 & 44.7 & 43.7 \\
 &  & 2 & 14.7 & 9.1 & 35.1 & 19.7 & 60.0 & 34.8 & 43.8 & 46.2 \\
 &  & 4 & 25.0 & 21.3 & 43.6 & 30.0 & 69.5 & 63.4 & 60.1 & 64.3 \\
 &  & 8 & 27.0 & 22.0 & 46.6 & 31.8 & 73.0 & 71.3 & 69.3 & 71.2 \\
 &  & 16 & 28.8 & 27.4 & 50.1 & 35.4 & 71.9 & 75.0 & 69.5 & 72.1 \\
 & & 32 & \underline{30.4} & 31.1 & \underline{52.9} & 38.1 & 73.8 & \textbf{78.7} & 73.0 & 75.2 \\
 & & 64 & 29.6 & \underline{32.9} & \textbf{53.0} & \underline{38.5} & \underline{74.6} & \textbf{78.7} & \textbf{74.3} & \textbf{75.9} \\
 & \multirow{-8}{*}{4} & 128 & 30.1 & 31.7 & 49.4 & 37.1 & \textbf{75.6} & \underline{78.0} & 73.6 & \underline{75.7} \\
\midrule
\rowcolor{blue!5} \multicolumn{11}{c}{\emph{Flow map decoding}} \\
 & & 1 & 19.2 & 17.1 & 33.6 & 23.3 & 54.2 & 50.0 & 42.7 & 49.0 \\
 & & 2 & 21.1 & 18.3 & 36.2 & 25.2 & 66.4 & 68.3 & 55.1 & 63.3 \\
 & & 4 & 24.9 & 21.3 & 39.4 & 28.5 & 69.7 & 73.8 & 64.7 & 69.4 \\
 & & 8 & 28.1 & 26.8 & 40.3 & 31.8 & 70.9 & 73.8 & 68.2 & 71.0 \\
 \multirow{-5}{*}{\textbf{\ours{} (Ours)}} & \multirow{-5}{*}{4} & 16 & 28.9 & 25.6 & 43.3 & 32.6 & -- & -- & -- & -- \\
\bottomrule
\end{tabular}}
\end{table}

\begin{table}[t!]
\centering
\footnotesize
\caption{Single-token decoding.
For diffusion models, we decode exactly one highest-probability token per forward.
Models marked $^\dagger$ are trained by us from the matched AR model on Dolci-Instruct-SFT for five epochs.}
\label{tab:ntp_full}
\begin{tabular}{lccccccc}
\toprule
{Method} & {IFEval} & {BBH} & {MMLU-Pro} & {GPQA} & {GSM8K} & {HumanEval} & {Average} \\
\midrule
\rowcolor{gray!10} \multicolumn{8}{c}{\emph{0.5B scale}} \\
Qwen2.5 SFT$^\dagger$ & 48.4 & 26.3 & 11.9 & 17.9 & 32.0 & 35.4 & 28.7 \\
MTP$^\dagger$~\citep{gloeckle2024better} & 50.6 & 17.3 & 9.2 & 18.8 & 27.1 & 32.3 & 25.9 \\
DeepSeek-MTP$^\dagger$~\citep{deepseekv3} & 53.2 & 24.1 & \textbf{14.0} & 18.5 & \underline{35.5} & 36.0 & 30.2 \\
MARS$^\dagger$~\citep{jin2026mars} & 51.3 & \underline{26.6} & \underline{12.4} & 19.4 & 32.8 & \textbf{40.2} & \underline{30.4} \\
SBD$^\dagger$~\citep{gat2025set} & \textbf{55.3} & 20.6 & 9.2 & \textbf{22.5} & 27.2 & 30.8 & 27.6 \\
BD3LM$^\dagger$~\citep{arriola2025bd3lm} & 47.1 & 7.5 & 2.0 & 17.9 & 30.6 & 31.7 & 22.8 \\
\rowcolor{blue!5} \textbf{\ours{} 0.5B (Ours)$^\dagger$} & \underline{55.1} & \textbf{27.6} & 12.1 & \underline{21.9} & \textbf{36.1} & \underline{36.6} & \textbf{31.6} \\
\midrule
\rowcolor{gray!10} \multicolumn{8}{c}{\emph{7B scale}} \\
Qwen2.5 SFT$^\dagger$ & 73.8 & 59.2 & 41.6 & 27.9 & 69.1 & \underline{78.7} & 58.4 \\
MTP$^\dagger$~\citep{gloeckle2024better} & 73.2 & 51.7 & 37.3 & 26.8 & 60.5 & 73.8 & 53.9 \\
DeepSeek-MTP$^\dagger$~\citep{deepseekv3} & \underline{78.0} & 60.7 & 40.6 & 30.1 & 70.3 & 80.5 & \underline{60.0} \\
MARS$^\dagger$~\citep{jin2026mars} & 77.8 & 54.6 & \textbf{44.4} & 26.6 & 73.2 & \textbf{81.7} & 59.7 \\
Fast-dLLM~v2$^\dagger$ & 70.6 & 53.5 & \underline{43.1} & \underline{28.6} & 67.2 & 32.9 & 49.3 \\
Fast-dLLM~v2~\citep{wu2026fast} & 64.5 & 58.1 & 39.7 & 21.4 & \underline{80.3} & 62.2 & 54.4 \\
Dream~\citep{dream2024} & 57.1 & \textbf{66.0} & 43.6 & 26.6 & \textbf{81.7} & 58.5 & 55.6 \\
\rowcolor{blue!5} \textbf{\ours{} (Ours)$^\dagger$} & \textbf{78.9} & \underline{63.0} & 41.8 & \textbf{31.0} & 73.8 & 76.8 & \textbf{60.9} \\
\midrule
\rowcolor{gray!10} \multicolumn{8}{c}{\emph{8B scale}} \\
Qwen3 SFT$^\dagger$ & 38.1 & \textbf{71.6} & \underline{56.2} & 29.5 & 62.9 & \textbf{82.3} & 56.8 \\
LLaDA~\citep{llada2025} & 58.0 & 30.9 & 28.7 & 25.4 & 76.0 & 42.1 & 43.5 \\
Efficient-DLM~\citep{fu2026edlm}
 & \underline{58.4} & \underline{70.8} & \textbf{61.5} & \textbf{33.7} & \textbf{87.4} & 68.9 & \textbf{63.5} \\
\rowcolor{blue!5} \textbf{\ours{} (Ours)$^\dagger$} & \textbf{62.3} & 70.7 & \underline{56.2} & \underline{31.7} & \underline{77.8} & \underline{79.9} & \underline{63.1} \\
\bottomrule
\end{tabular}
\end{table}

\begin{table}[t!]
\centering
\footnotesize
\caption{Self-speculative and parallel decoding accuracy (\%) and TPF across model scales.}
\label{tab:selfspec_full}
\resizebox{\linewidth}{!}{%
\begin{tabular}{lccccccccccccccc}
\toprule
 &  & \multicolumn{2}{c}{IFEval} & \multicolumn{2}{c}{BBH} & \multicolumn{2}{c}{MMLU-Pro} & \multicolumn{2}{c}{GPQA} & \multicolumn{2}{c}{GSM8K} & \multicolumn{2}{c}{HumanEval} & \multicolumn{2}{c}{Average} \\
\cmidrule(lr){3-4}\cmidrule(lr){5-6}\cmidrule(lr){7-8}\cmidrule(lr){9-10}\cmidrule(lr){11-12}\cmidrule(lr){13-14}\cmidrule(lr){15-16}
Method & $B$ & Acc. & TPF & Acc. & TPF & Acc. & TPF & Acc. & TPF & Acc. & TPF & Acc. & TPF & Acc. & TPF \\
\midrule
\rowcolor{gray!10} \multicolumn{16}{c}{\emph{0.5B scale}} \\
AR SFT & -- & 48.4 & 1.0 & 26.3 & 1.0 & 11.9 & 1.0 & 17.9 & 1.0 & 32.0 & 1.0 & 35.4 & 1.0 & 28.7 & 1.0 \\
MARS & 4 & 51.0 & 1.8 & \textbf{31.7} & 2.3 & \textbf{13.9} & 1.7 & 18.3 & 2.0 & 31.7 & 2.1 & 32.3 & 1.9 & 29.8 & 2.0 \\
MTP & 4 & 51.2 & 2.5 & 17.6 & 3.3 & 9.4 & 3.1 & 17.9 & 2.8 & 26.8 & 2.9 & 33.5 & 2.9 & 26.1 & 2.9 \\
DeepSeek-MTP & 2 & 51.9 & 1.6 & 23.7 & 1.8 & \underline{12.1} & 1.9 & 18.8 & 1.9 & 35.5 & 1.8 & 35.4 & 1.7 & 29.6 & 1.8 \\
BD3LM & 4 & 46.8 & 1.5 & 8.3 & 1.6 & 1.3 & 1.5 & 19.4 & 1.5 & 29.8 & 1.5 & 29.9 & 1.5 & 22.6 & 1.5 \\
SBD & 4 & \textbf{56.4} & 2.2 & 20.6 & 3.2 & 9.4 & 2.7 & \underline{23.2} & 2.7 & 26.7 & 2.7 & 31.1 & 2.8 & 27.9 & 2.7 \\
PTP & 4 & 42.3 & 1.7 & 26.0 & 2.5 & 11.9 & 2.0 & 15.0 & 1.9 & 35.3 & 2.3 & \textbf{42.7} & 2.1 & 28.9 & 2.1 \\
\rowcolor{blue!5} & 4 & 55.1 & 2.2 & 27.6 & 3.3 & \underline{12.1} & 3.1 & 21.9 & 2.9 & 36.1 & 2.7 & 36.6 & 2.9 & \underline{31.6} & 2.9 \\
\rowcolor{blue!5} & 8 & \underline{55.3} & 2.7 & \underline{27.9} & 5.1 & 10.0 & 5.5 & \textbf{23.7} & 4.2 & \textbf{37.8} & 3.6 & 39.0 & 3.0 & \textbf{32.3} & 4.0 \\
\rowcolor{blue!5} \multirow{-3}{*}{\textbf{\ours{} (Ours)}} & 16 & 54.2 & 3.1 & 27.0 & 6.3 & 9.1 & 9.2 & 18.1 & 4.8 & \underline{36.7} & 3.6 & \underline{39.6} & 3.3 & 30.8 & 5.0 \\
\midrule
\rowcolor{gray!10} \multicolumn{16}{c}{\emph{7B scale}} \\
AR SFT & -- & 73.8 & 1.0 & 59.2 & 1.0 & 41.6 & 1.0 & 27.9 & 1.0 & 69.1 & 1.0 & 78.7 & 1.0 & 58.4 & 1.0 \\
MTP & 4 & 73.2 & 1.9 & 51.9 & 3.3 & 38.3 & 2.5 & 27.7 & 2.6 & 60.5 & 3.0 & 73.8 & 3.1 & 54.2 & 2.7 \\
DeepSeek-MTP & 2 & \underline{78.0} & 1.5 & \underline{60.8} & 1.8 & \textbf{42.5} & 1.6 & 29.5 & 1.7 & 70.3 & 1.7 & 80.5 & 1.9 & \underline{60.3} & 1.7 \\
Fast-dLLM~v2 & 32/8 & 62.8 & 1.9 & 58.1 & 2.3 & 39.2 & 2.2 & 22.3 & 2.2 & \textbf{80.4} & 2.5 & 63.4 & 1.8 & 54.4 & 2.1 \\
Dream & Canvas & 18.9 & 4.2 & 42.5 & 4.2 & \underline{42.2} & 4.4 & 9.8 & 4.4 & 38.4 & 3.5 & 23.8 & 3.8 & 29.3 & 4.1 \\
 & 4 & 77.8 & 1.9 & 52.9 & 4.1 & 40.0 & 2.8 & 29.2 & 2.8 & 74.0 & 2.8 & \textbf{81.7} & 2.9 & 59.3 & 2.9 \\
MARS & 8 & 76.3 & 2.0 & 59.6 & 5.7 & 41.5 & 3.3 & \underline{30.8} & 3.2 & 71.3 & 3.5 & \underline{81.1} & 3.5 & 60.1 & 3.5 \\
 & 16 & 75.8 & 2.0 & 55.8 & 6.3 & 39.2 & 3.6 & 26.8 & 3.2 & \underline{75.4} & 3.7 & 77.4 & 3.7 & 58.4 & 3.7 \\
\rowcolor{blue!5} & 4 & \textbf{78.9} & 1.7 & \textbf{63.0} & 3.4 & 41.8 & 2.4 & \textbf{31.0} & 2.3 & 73.8 & 2.8 & 76.8 & 2.8 & \textbf{60.9} & 2.6 \\
\rowcolor{blue!5} & 8 & 77.5 & 2.8 & 59.1 & 6.3 & 39.4 & 3.7 & 28.1 & 4.0 & 75.3 & 4.8 & 76.2 & 4.6 & 59.3 & 4.3 \\
\rowcolor{blue!5} \multirow{-3}{*}{\textbf{\ours{} (Ours)}} & 16 & 74.9 & 2.8 & 57.3 & 7.9 & 39.0 & 3.8 & 28.1 & 4.0 & 75.0 & 5.8 & 73.8 & 5.4 & 58.0 & 4.9 \\
\midrule
\rowcolor{gray!10} \multicolumn{16}{c}{\emph{8B scale}} \\
AR SFT & -- & 38.1 & 1.0 & \textbf{71.6} & 1.0 & 56.2 & 1.0 & 29.5 & 1.0 & 62.9 & 1.0 & \textbf{82.3} & 1.0 & 56.8 & 1.0 \\
LLaDA & Canvas & 36.8 & 3.2 & 31.0 & 3.2 & 28.8 & 2.7 & 9.4 & 2.7 & 47.2 & 2.7 & 23.8 & 2.5 & 29.5 & 2.8 \\
 & 4 & 57.1 & 2.6 & 66.9 & 3.2 & \underline{60.7} & 3.0 & \underline{33.0} & 2.8 & \underline{85.4} & 2.9 & 66.5 & 3.0 & 61.6 & 2.9 \\
\multirow{-2}{*}{Efficient-DLM} & 32 & \underline{58.8} & 2.4 & \underline{70.7} & 3.3 & \textbf{61.6} & 3.1 & \textbf{34.4} & 2.9 & \textbf{87.4} & 2.9 & 68.9 & 3.0 & \textbf{63.6} & 2.9 \\
\rowcolor{blue!5} \textbf{\ours{} (Ours)} & 4 & \textbf{62.3} & 2.4 & \underline{70.7} & 3.4 & 56.2 & 3.2 & 31.7 & 3.1 & 77.8 & 2.6 & \underline{79.9} & 2.9 & \underline{63.1} & 2.9 \\
\bottomrule
\end{tabular}}
\end{table}

We describe the evaluation setup and metrics, then report detailed results including per-benchmark scores.

\subsection{TPF evaluation and per-benchmark scores}
\label{app:full_results}

\paragraph{Tokens per forward (TPF).}
Across evaluation examples $j$, we compute TPF as
\begin{equation}\label{eq:tokens_per_forward}
    \text{TPF} = \frac{\sum_j T_j}{\sum_j F_j},
\end{equation}
where $T_j$ is the number of committed tokens and $F_j$ is the number of forward passes.
AR has TPF one by definition.
For \ours{} and Efficient-DLM, $F_j$ counts the number of fused verification-and-redraft forward passes in addition to one forward pass for initial drafting, excluding prompt prefill and cache refreshes for Efficient-DLM.
For Fast-dLLM~v2, $F_j$ includes top-level denoising, prefill, and cache update forward passes.

\paragraph{Per-benchmark scores.}
\Cref{tab:flow_full} gives the per-benchmark scores of flow and flow map decoding used to construct \Cref{fig:multitoken_frontier}.
\Cref{tab:ntp_full} expands single-token decoding \Cref{tab:ntp_main} into per-benchmark accuracies.
\Cref{tab:selfspec_full} gives the per-benchmark values underlying the parallel decoding results in \Cref{fig:selfspec_frontier} including our self-speculative decoding results.
All methods in \Cref{fig:selfspec_frontier} decode by drafting and verifying, and the verifier follows the native configuration of each method.
Causal block predictors verify against greedy AR, MARS uses its released batched verifier, and Efficient-DLM uses its exact verifier.
Fast-dLLM~v2, Dream, and LLaDA instead verify against their deterministic one-position decoding trajectories.

\subsection{Decoding throughput}
\label{app:throughput}

\begin{table}[t!]
\centering
\footnotesize
\caption{Per-task accuracy (\%) and throughput in tokens per second (TPS) at 7--8B scales, measured using one H100 GPU with batch size one.
Efficient-DLM is 8B and other methods are 7B.}
\label{tab:throughput_full}
\begin{tabular}{llccccccccc}
\toprule
 &  &  & \multicolumn{2}{c}{IFEval} & \multicolumn{2}{c}{GSM8K} & \multicolumn{2}{c}{HumanEval} & \multicolumn{2}{c}{Average} \\
\cmidrule(lr){4-5}\cmidrule(lr){6-7}\cmidrule(lr){8-9}\cmidrule(lr){10-11}
Method & Setting & Threshold & Accuracy & TPS & Accuracy & TPS & Accuracy & TPS & Accuracy & TPS \\
\midrule
AR SFT & -- & -- & 73.8 & 40.9 & 69.1 & 40.6 & 78.7 & 37.0 & 73.9 & 39.5 \\
DeepSeek-MTP & $B=2$ & -- & \underline{78.0} & 45.1 & 70.3 & 57.2 & \textbf{80.5} & 58.4 & \underline{76.3} & 53.6 \\
\midrule
\multirow{3}{*}{MARS} & $B=4$ & 0.95 & 77.6 & 42.0 & 70.7 & 59.8 & \underline{79.9} & 43.3 & 76.1 & 48.4 \\
 & $B=8$ & 0.95 & 77.3 & 40.9 & 69.1 & 62.7 & 78.0 & 43.1 & 74.8 & 48.9 \\
 & $B=16$ & 0.95 & 75.0 & 41.1 & 72.9 & 61.3 & 75.0 & 44.6 & 74.3 & 49.0 \\
\midrule
\multirow{3}{*}{Fast-dLLM~v2} & $B=4$ & 0.90 & 65.8 & 52.1 & 81.6 & 66.2 & 67.1 & 56.5 & 71.5 & 58.3 \\
 & $B=8$ & 0.90 & 65.4 & 60.9 & 80.1 & 98.5 & 57.3 & 71.1 & 67.6 & 76.8 \\
 & $B=16$ & 0.90 & 61.9 & 65.1 & 77.3 & 116.0 & 56.1 & 78.9 & 65.1 & 86.7 \\
\midrule
\multirow{6}{*}{Efficient-DLM} & $B=4$ & 0.70 & 58.2 & 31.8 & \underline{87.8} & 37.1 & 70.1 & 38.4 & 72.0 & 35.8 \\
 & $B=8$ & 0.70 & 58.2 & 40.7 & 87.7 & 55.6 & 68.9 & 58.5 & 71.6 & 51.6 \\
 & $B=16$ & 0.70 & 58.4 & 47.2 & \textbf{88.4} & 70.5 & 67.1 & 75.4 & 71.3 & 64.4 \\
 & $B=32$ & 0.70 & 59.1 & 49.3 & 87.5 & 74.4 & 63.4 & 84.8 & 70.0 & 69.5 \\
 & $B=32$ & 0.60 & 58.6 & 56.1 & 82.2 & 87.1 & 55.5 & 96.3 & 65.4 & 79.9 \\
 & $B=32$ & 0.50 & 52.9 & 71.9 & 72.0 & 102.9 & 42.1 & 111.2 & 55.7 & 95.3 \\
\midrule
\multirow{3}{*}{Dream} & $k=1$ & -- & 57.1 & 22.7 & 81.7 & 36.0 & 58.5 & 26.7 & 65.8 & 28.5 \\
 & $k=2$ & -- & 44.7 & 45.6 & 75.0 & 72.0 & 35.4 & 53.3 & 51.7 & 57.0 \\
 & $k=4$ & -- & 33.8 & 91.2 & 59.3 & 143.5 & 20.1 & 106.3 & 37.7 & 113.7 \\
\midrule
\rowcolor{blue!5} & $B=4$ & -- & \textbf{78.9} & 60.9 & 73.8 & 91.7 & 76.8 & 87.2 & \textbf{76.5} & 79.9 \\
\rowcolor{blue!5} & $B=8$ & -- & 77.5 & 61.1 & 75.3 & 116.1 & 76.2 & 104.3 & \underline{76.3} & 93.8 \\
\rowcolor{blue!5} \multirow{-3}{*}{\textbf{\ours{} (Ours)}} & $B=16$ & -- & 74.9 & 59.7 & 75.0 & 125.2 & 73.8 & 115.7 & 74.5 & 100.2 \\
\bottomrule
\end{tabular}
\end{table}

\Cref{fig:throughput} shows end-to-end generation throughput measured on a single H100 GPU with batch size one with \texttt{bfloat16} inference and a 32-token warm-up.
We report tokens per second (TPS), which is the total generated tokens divided by total generation time.
\Cref{tab:throughput_full} reports the per-task values underlying \Cref{fig:throughput}, and method-specific settings are given below.

\begin{itemize}[leftmargin=*]
    \item \textbf{AR SFT.} Qwen2.5 7B AR SFT uses standard HuggingFace greedy generation and emits one new token per decoding forward.
    \item \textbf{\ours{} (Ours).} For each block size $B\in\{4,8,16\}$, we use the corresponding trained checkpoint with self-speculative decoding.
    A one-step denoiser draft of the next block is verified against the model's own greedy AR predictions.
    The fused pass packs the clean draft with $B$ noisy successor blocks, one for each possible accepted-prefix length.
    Clean draft positions attend causally to produce the verification logits, while each successor block attends to the prefix and its candidate accepted draft prefix and bidirectionally within the block.
    After selecting the longest verified prefix, we retain its KV states and use the corresponding successor block as the next draft, avoiding a separate redrafting forward.
    \item \textbf{DeepSeek-MTP.}
    We train 7B scale DeepSeek-MTP with one DeepSeek-style auxiliary multi-token-prediction stage ($D=1$~\citep{deepseekv3}).
    Greedy verified self-speculation compares the multi-token proposal against the base model's greedy AR prediction.
    \item \textbf{MARS.}
    We use the released MARS Qwen2.5 7B checkpoints~\citep{jin2026mars} for $B\in\{4,8,16\}$.
    We report results using the paper's probability-threshold decoder at threshold $0.95$, with its sliding KV cache.
    Each round commits the leftmost proposal, then commits the longest consecutive continuation whose top-token probabilities remain at least the confidence threshold.
    \item \textbf{Fast-dLLM~v2.}
    We use the released Fast-dLLM~v2 7B checkpoint~\citep{wu2026fast} with its native confidence decoder at threshold $0.9$.
    Block caching is disabled, and the global and sub-block sizes are both set to $B\in\{4,8,16\}$.
    \item \textbf{Efficient-DLM.}
    We evaluate Efficient-DLM 8B~\citep{fu2026edlm} with its native confidence-threshold decoder at threshold $0.7$ for $B\in\{4,8,16,32\}$, together with the $B=32$ speed--quality sweep at confidence thresholds in $\{0.5,0.6\}$.
    We use the model's native prompt format over Qwen chat templates.
    \item \textbf{Dream.}
    We use Dream-v0 7B Instruct~\citep{dream2024} with its native entropy-based unmasking, targeting a mean of $k\in\{1,2,4\}$ filled canvas positions per denoising forward.
    For throughput, we follow Efficient-DLM~\citep{fu2026edlm} and compute TPS as the total number of generated canvas tokens divided by total generation time.
\end{itemize}

\subsection{Confidence decoding}
\label{app:confidence_results}

\begin{figure}[t!]
\centering
\includegraphics[width=0.6\linewidth]{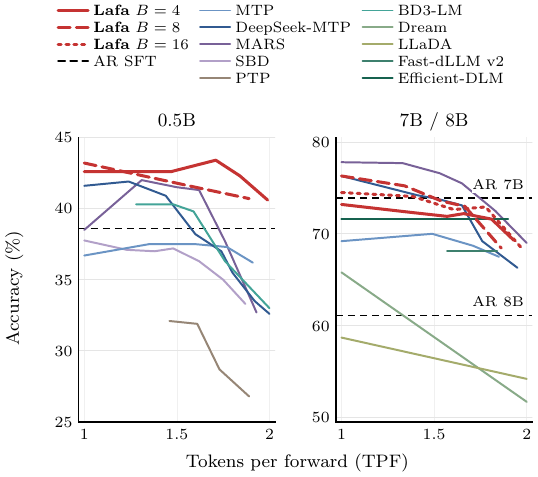}
\caption{Confidence decoding, averaged over GSM8K, HumanEval, and IFEval.
In the right panel, \ours{} uses 8B for $B=4$ and 7B for $B\in\{8,16\}$.
Efficient-DLM ($B=32$) and LLaDA are 8B, and other baselines are 7B.}
\label{fig:confidence_frontier}
\end{figure}

As an alternative to self-speculative decoding, we follow MARS~\citep{jin2026mars} and accept the longest left-to-right prefix of a one-step flow draft whose token confidence exceeds a threshold $\lambda$ (\Cref{app:denoiser_inference}), and apply the same rule to block-autoregressive baselines.
We implement it with the fused decoding of \Cref{alg:fused-denoiser-inference}, where $\hat{\bf y}^{1:a}$ is instead the longest prefix of $\hat{\bf y}$ with confidences $({\bf p}^i)_{\hat{\bf y}^i}\geq\lambda$ (where we always set $a\geq 1$).
Since $a$ is known before the forward pass, only the block ${\bf z}^{(a)}$ is needed instead of all $B$ blocks.
To apply this strategy, Dream and LLaDA use their native global entropy or confidence ordering over masked positions, whereas Fast-dLLM~v2 applies the threshold within the active sub-block.
In \Cref{fig:confidence_frontier}, we show an accuracy--TPF tradeoff measured by sweeping over confidence thresholds.
At 0.5B, \ours{} with block size four attains the highest peak accuracy among all methods, including AR SFT, and its accuracy remains at or above that of pure AR at high thresholds.
At 7B, \ours{} exceeds AR SFT at high thresholds, while MARS, which is trained for confidence decoding, degrades more gracefully as the threshold is lowered.
At 8B, \ours{} remains above AR SFT across the tested range and on par with Efficient-DLM.

Since AR verification avoids the degradation at lower thresholds without tuning a threshold, we use self-speculative decoding as our default in the main text.

\end{document}